\newtheorem{theorem}{Theorem}
\newtheorem{lemma}{Lemma} 
\newtheorem{remark}{Remark}
\newtheorem{corollary}{Corollary}
\newtheorem{definition}{Definition}
\newtheorem{property}{Property}
\newtheorem*{theorem*}{Theorem}
\newcommand{\Lojasiewicz}{{\L}ojasiewicz}
\newtheorem{assumption}{Assumption}
\crefname{algo}{Algorithm}{Algorithms}
\crefname{model}{Model}{Models}
\crefname{lemma}{Lemma}{Lemmas}
\crefname{fact}{Fact}{Facts}
\crefname{theorem}{Theorem}{Theorems}
\crefname{corollary}{Corollary}{Corollaries}
\crefname{claim}{Claim}{Claims}
\crefname{example}{Example}{Examples}
\crefname{problem}{Problem}{Problems}
\crefname{definition}{Definition}{Definitions}
\crefname{assumption}{Assumption}{Assumptions}
\crefname{subsection}{Subsection}{Subsections}
\crefname{section}{Section}{Sections}
\crefname{algorithm}{Algorithm}{Algorithms}
\crefname{algocf}{alg.}{algs.}
\Crefname{algocf}{Algorithm}{Algorithms}
\crefname{proposition}{Proposition}{Propositions}
\crefname{exemple}{Exemple}{Examples}
\crefname{remark}{Remark}{Remarks}
\crefname{property}{Property}{Properties}
\crefname{inequality}{Inequality}{Inequalities}
\crefname{goal}{Goal}{Goals}
\crefname{application}{Application}{Applications}
\newcommand{\gs}{\vspace{-0em}}
\DeclarePairedDelimiter\floor{\lfloor}{\rfloor}
\newcommand{\ffrac}[2]{\ensuremath{\frac{\displaystyle #1}{\displaystyle #2}}}
\newcommand{\allforone}{\texttt{All-for-one}}
\newcommand{\FedAvg}{\texttt{FedAvg}}
\newcommand{\E}{\mathbb{E}}
\newcommand{\N}{\mathbb{N}}
\newcommand{\R}{\mathbb{R}}
\newcommand{\V}{\mathbb{V}}
\newcommand{\lnrm}{\left \|} 
\newcommand{\rnrm}{\right \|} 
\newcommand{\vertiii}[1]{{\vert\kern-0.25ex\vert\kern-0.25ex\vert #1 
    \vert\kern-0.25ex\vert\kern-0.25ex\vert}}
\newcommand{\FullExpec}[1]{\E \left[#1\right]} 
\newcommand{\fullexpec}[1]{\E [#1]} 
\newcommand{\Expec}[2]{\E \left[#1~\middle|~#2\right]} 
\newcommand{\expec}[2]{\E [#1~|~#2]} 
\newcommand{\PdtScl}[2]{\left\langle#1,#2\right\rangle}  
\newcommand{\pdtscl}[2]{\langle#1,#2\rangle}  
\newcommand{\SqrdNrm}[1]{ \lnrm #1\rnrm^2} 
\newcommand{\sqrdnrm}[1]{ \| #1\|^2} 
\newcommand{\bigpar}[1]{\left( #1 \right)} 
\newcommand{\OneToN}{[N]}
\newcommand{\OneToT}{[T]}
\newcommand{\OneToK}{[K]}
\newcommand{\smoothness}{\beta}
\DeclareFontFamily{U}{wncy}{}
\DeclareFontShape{U}{wncy}{m}{n}{<->wncyr10}{}
\DeclareSymbolFont{mcy}{U}{wncy}{m}{n}
\DeclareMathSymbol{\Sha}{\mathord}{mcy}{"58}
\newcommand{\oset}[3][0ex]{%
  \mathrel{\mathop{#3}\limits^{
    \vbox to#1{\kern-2\ex@
    \hbox{$\scriptstyle#2$}\vss}}}}
\definecolor{brickred}{rgb}{0.8, 0.25, 0.33}
\definecolor{tabblue}{HTML}{1F77B4}
\definecolor{taborange}{HTML}{FF7F0e}
\definecolor{tabred}{HTML}{d62728}
\definecolor{tabgreen}{HTML}{2ca02c}
\definecolor{tabgray}{HTML}{7f7f7f}
\definecolor{forestgreen}{rgb}{0.13, 0.55, 0.13}
\definecolor{carmine}{rgb}{0.59, 0.0, 0.09}
\newcommand{\tcmv}[1]{\textcolor{tabgreen}{#1}}
\definecolor{lowNegative}{rgb}{1,0.8,0.8} 
\definecolor{mediumNegative}{rgb}{1,1,0.5} 
\definecolor{highNegative}{rgb}{0.8,1,0.8} 
\definecolor{lowPositive}{rgb}{0.8,1,0.8} 
\definecolor{mediumPositive}{rgb}{1,1,0.5} 
\definecolor{lightred}{rgb}{1,0.8,0.8} 
\newcommand{\applyColor}[1]{
  \pgfmathparse{#1}
  \let\value\pgfmathresult
  \ifdim \value pt > 0.05pt
    \cellcolor{tabgreen}#1
  \else\ifdim \value pt > 0.01pt
    \cellcolor{lightred}#1
  \else
    \cellcolor{tabred}#1
  \fi\fi
}
\title{Adaptive collaboration for online personalized distributed learning with heterogeneous clients}
\author{%
  Constantin Philippenko$^1$, Batiste Le Bars$^2$, Kevin Scaman$^1$, Laurent Massoulié$^1$ \\
  $^1$Argo Team, Inria Paris - Département d’informatique de l’ENS, PSL Research University\\
  (firstname).(lastname)@inria.fr \\
  $^2$Univ. Lille, Inria, CNRS, Centrale Lille, UMR 9189, CRIStAL, F-59000 Lille\\
  batiste.le-bars@inria.fr \\
}
\begin{document}

\maketitle


\begin{abstract}
	We study the problem of online personalized decentralized learning with $N$ statistically heterogeneous clients collaborating to accelerate local training. An important challenge in this setting is to select relevant collaborators to reduce gradient variance while mitigating the introduced bias. To tackle this, we introduce a gradient-based collaboration criterion, allowing each client to dynamically select peers with similar gradients during the optimization process. Our criterion is motivated by a refined and more general theoretical analysis of the \texttt{All-for-one} algorithm, proved to be optimal in Even et al. (2022) for an oracle collaboration scheme. We derive excess loss upper-bounds for smooth objective functions, being either strongly convex, non-convex, or satisfying the Polyak-Łojasiewicz condition; our analysis reveals that the algorithm acts as a variance reduction method where the speed-up depends on a \emph{sufficient variance}. 
	We put forward two collaboration methods instantiating the proposed general schema; and we show that one variant preserves the optimality of \texttt{All-for-one}.  We validate our results with experiments on synthetic and real datasets.
\end{abstract}

\section{Introduction}

Distributed learning has emerged as a popular paradigm where multiple clients collaboratively train machine learning models without sharing raw data \citep{konecny_federated_2016,mcmahan_communication-efficient_2017}. In statistically heterogeneous environments, personalization becomes crucial as clients often have non-identically distributed data and distinct learning objectives \citep{hanzely2020lower,mansour2020three,tan2022towards}. In such heterogeneous settings, the classical \FedAvg~algorithm fails to find a satisfying global optimal points \citep{li2019convergence}. A key challenge is therefore to design collaboration strategies that selectively exploit similarities across clients to improve the train/test performance \citep{beaussart2021waffle,chayti2021linear,even2022sample}. 

Formally, we consider a distributed setting with $N \in \N$ clients, where each client $i \in \OneToN$ wants to find a model $\theta^\star_i$ in $\R^d$ ($d$ the number of features) minimizing the local loss $R_{i}: \R^d \mapsto \R$. Hence, we do not seek to find a global minimizer of the averaged loss as done for instance with \FedAvg, but instead to find $N$ personalized models. We focus on stochastic gradient descent-like algorithms \citep{robbins1951stochastic} in an online setting where at each iterations $t$ in $\N$, every client has access to an unbiased stochastic gradient oracle  $g_i^t$.
In order to reduce the sample complexity, i.e. the number of samples or stochastic gradients required to reach small excess loss error, clients may have the possibility to use stochastic gradients from other clients, which can however 
be biased since in general for two clients $i,k$ in $\OneToN$, $\E g_i^t \neq \nabla R_{k}$ \citep{karimireddy2020scaffold,chayti2021linear}.

In this work, we consider a class of algorithms called \emph{weighted averaging aggregation rules} \citep[WGA, e.g.][]{chayti2021linear}. More precisely, for a specific client $i\in\OneToN$, a set of weights $\alpha_{ik}^{t}$ in $[0, 1]$ is computed, either on the client itself, or on a third-party server, before proceeding to the following update:
\begin{align}
	\label{eq:update_equation}\tag{\allforone}
	\forall i \in \OneToN, \forall t \in \R^\star, \quad \theta^t_i = \theta^{t-1}_i - \eta_i^t \sum_{k=1}^N \alpha_{ik}^{t}\, g_k^t(\theta^{t-1}_i) \,.
\end{align}

This rather simple algorithm is referred as \texttt{All-for-one} in \citet{even2022sample} since all clients agree to compute stochastic gradients at the parameter $\theta_i$ of any other client $i$. Notice that for generality, we consider that the aggregation parameter $\alpha$ may change over time -- leading to dynamically build clusters -- which was not considered in the original paper. It is important to also notice that the collaboration here makes sense because of the stochastic nature of the gradients. Indeed, if each client had access to the full gradient at each iteration, there would be no point in collaborating. Clients want to collaborate in order to speed-up their training by reducing their variance, this is why \allforone~can be seen as a \textit{variance reduction} method compared to full local training. In addition, appropriately selecting the weights also controls the bias introduced by other clients, thereby it can be seen as a \emph{bias reduction} mechanism compared to \FedAvg.

For a certain class of losses with uniformly bounded gradient difference, \citet{even2022sample} derive a lower bound on the number of iterations required to reach a certain optimization error $\varepsilon > 0$. They further show that, for any $\varepsilon > 0$, there exists a set of weights $(\alpha_{ik})_{i \in \OneToN, k \in \OneToN}$ that makes \allforone~optimal in term of sample complexity. However, fixing $\alpha$ in their setting requires to know, \emph{before running the algorithm}, the uniform upper-bound on the clients (gradients) difference, reason why it is referred to as \emph{oracle} in this work. Moreover, $\alpha$ also depends on the final desired accuracy $\varepsilon$ -- which makes it even more unpractical.

This work seeks to address these limitations. First, we aim to derive the weights $\alpha$ in an \emph{online manner} -- i.e. during the optimization process -- while preserving the optimality.  Second, we seek convergence guarantees and weight update mechanisms that do not depend on the final target accuracy $\varepsilon$, thus enhancing the algorithm’s practicality and applicability in online and dynamic settings. 

\paragraph{Related works.} Statistical heterogeneity is a classical scenario in federated/decentralized learning. A standard approach consists in learning a single global model that minimizes the weighted average of local objective functions across clients \citep[e.g.][]{li2019convergence, karimireddy2020scaffold, li2020federated}. However, this global model may fail to capture client-specific patterns, leading to the growing interest in personalized federated learning (PFL) \citep{kulkarni2020survey, sattler2020clustered, dinh2020personalized, tan2022towards, capitaine2024unravelling}. A central question in PFL is how to effectively orchestrate client collaboration to balance personalization and generalization. Existing approaches address this by: clustering clients into groups with similar data distributions \citep{ghosh2020efficient, muhammad2020fedfast, sattler2020clustered, werner2023provably}; modeling local distributions as mixtures of unknown latent distributions \citep{marfoq2021federated}; performing global training followed by local fine-tuning \citep{dinh2020personalized, fallah2020personalized}; learning a shared representation complemented by local personalized modules \citep{arivazhagan2019federated, collins2021exploiting, liang2020think, philippenko2025depth}; formulating the learning problem as a bilevel optimization task \citep{deng2020adaptive, hashemi2024cobo}; combining global and local objectives via regularization or interpolation \citep{deng2020adaptive, li2021ditto, mansour2020three, liu2023feddwa}; or maintaining $N$ local models and updating them using weighted gradient averaging schemes \citep{beaussart2021waffle,chayti2021linear,even2022sample}.

\paragraph{Contributions.} This work follows the later approach and aims to design a practical algorithm strengthened by guarantees of convergence. Our contributions can be summarized as follows.
\begin{itemize}[leftmargin=*, itemsep=0pt, parsep=0pt, partopsep=0pt]
	\item Driven by practical considerations, we present a refined convergence analysis of the \allforone~algorithm, where collaboration weights are dynamically updated over time. Our analysis reveals a key relationship between client heterogeneity, gradient similarity at each iteration, and the resulting collaboration weights.
	\item More precisely, we introduce a general and theoretical framework for clients collaboration. We derive excess loss upper-bounds for smooth objective functions under three regimes: strongly convex, non-convex, and satisfying the Polyak–Łojasiewicz condition, validating the relevance of our method.
	\item We put forward two collaboration methods instantiating the proposed general schema -- one based on binary collaboration weights, and the other on continuous ones -- and explain how to to compute the weights in practice. We show that the binary schema still preserve the sampling optimality of the original \allforone~algorithm, while removing its dependence on the target accuracy $\varepsilon$.
	\item  Finally, we support our theory with experiments on synthetic and real datasets, showcasing the empirical performance of our schemes.
\end{itemize}

The rest of the paper is organized as follows. In \Cref{sec:math_background}, we present the mathematical background and introduce a descent lemma that serves as the starting point of our analysis; in \Cref{sec:adaptive_collaboration} we present the general framework for collaboration along with convergence guarantees; in \Cref{sec:application} we propose two choices of practical collaboration criterion; in \Cref{sec:experiments} we illustrate our analysis with experiments, \Cref{sec:ccl} concludes the paper, discusses broader impacts, and outlines open directions stemming from the limitations of our analysis.

\section{Mathematical background: a key descent lemma}
\label{sec:math_background}

In this section, we describe precisely the mathematical tools necessary to develop our analysis, and present our first key result. We denote by $\E$ and $\V$ the expectation and variance. We define for any client $i$ in $\OneToN$ and for any iteration $t$ in $\N^\star$, the local excess loss of client $i$ at time $t$ as $\varepsilon_{i}^t = \FullExpec{R_{i}(\theta_i^t) - R_{i}(\theta^\star_i)}$, where we assume that there exist at least one global optimum $\theta^\star_i \in \mathrm{argmin}_\theta R_{i}(\theta)$. We denote  $\mathcal{N}_i^t = \{k\in[N]~:~\alpha_{ik}^t > 0\}$ the set of clients collaborating with client $i$ at iteration $t$. We consider the following classical assumptions on the loss functions $(R_i)_{i \in \OneToN}$.

\begin{assumption}[$\smoothness$-smoothness] 
	\label{asu:smoothness}
	For any client $i$ in $\OneToN$, we assume that the loss function $R_i$ is $\smoothness$-smooth, i.e., $\exists \smoothness > 0$ such that $\forall \theta, \theta' \in \R^d, \|\nabla R_i(\theta) -
	\nabla R_i(\theta')\| \leq \smoothness\| \theta - \theta'\|$.
\end{assumption}

\begin{assumption}[Uniformly Bounded Stochastic Gradient Variance]
	\label{asu:bounded_variance}
	At any iteration $t \in \N$, the variance of the stochastic gradient is uniformly bounded for each local objective functions $(R_{i})_{i \in \OneToN}$, i.e., for any client $i$ in $\OneToN$, there exists a constant \(\sigma_i^2 > 0\) such that for all \(\theta \in \mathbb{R}^d\):
	$
	\V[g_{i}^t(\theta)] \leq \sigma_i^2,
	$
	where $g_{i}^t$ is the stochastic gradient at iteration $t$.
\end{assumption}

Based on these two standard assumptions in optimization, we have the following descent lemma, backbone of our analysis, that allows to derive all the other theorems \emph{via} the collaboration criterion.
\begin{lemma}[Descent lemma]
	\label{lem:descent_lemma}
	Let \allforone~be run on client $i$ in $\OneToN$ for $t$ in $\N$ iterations. Under \Cref{asu:bounded_variance,asu:smoothness}, taking for all $k$ in $\OneToN$, $\alpha_{ik}^t \geq 0 $ and $\eta_i^t < (\smoothness \sum_{k \in \OneToN} \alpha_{ik})^{-1}$, we have: 
	\begin{align*}
		\expec{\varepsilon_{i}^t}{\theta^{t-1}_i} - \varepsilon_{i}^{t-1}  &\leq \frac{\eta_i^t}{2}  \sum_{k\in \OneToN} \alpha_{ik}^{t} \bigpar{\sqrdnrm{\nabla R_{k}(\theta^{t-1}_i) - \nabla R_{i}(\theta^{t-1}_i)} -\sqrdnrm{\nabla R_{i}(\theta^{t}_i)}} +  \eta_i^t\beta \sigma^2_k {\alpha_{ik}^{t}}^2  
	\end{align*}
\end{lemma}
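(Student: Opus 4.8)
The plan is to start from the standard descent inequality for $\smoothness$-smooth functions and insert the \allforone~update, then take a conditional expectation so as to separate the bias (the mean direction) from the stochastic fluctuation (the variance) of the aggregated gradient. Concretely, writing $g := \sum_{k\in\OneToN}\alpha_{ik}^t g_k^t(\theta_i^{t-1})$ for the aggregated stochastic direction, so that $\theta_i^t - \theta_i^{t-1} = -\eta_i^t g$, \Cref{asu:smoothness} gives
\[
R_i(\theta_i^t) \le R_i(\theta_i^{t-1}) - \eta_i^t\,\pdtscl{\nabla R_i(\theta_i^{t-1})}{g} + \tfrac{\smoothness(\eta_i^t)^2}{2}\sqrdnrm{g}.
\]
Taking $\E[\,\cdot\mid\theta_i^{t-1}]$ and using unbiasedness of each oracle $g_k^t$ turns the linear term into $\pdtscl{\nabla R_i(\theta_i^{t-1})}{\bar g}$ with $\bar g := \sum_k\alpha_{ik}^t\nabla R_k(\theta_i^{t-1})$, while the quadratic term splits as $\E[\sqrdnrm{g}\mid\theta_i^{t-1}] = \sqrdnrm{\bar g} + \V[g\mid\theta_i^{t-1}]$. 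Assuming the oracles are independent across clients, the fluctuation decomposes into $\V[g\mid\theta_i^{t-1}] = \sum_k (\alpha_{ik}^t)^2\,\V[g_k^t]$, which \Cref{asu:bounded_variance} bounds by $\sum_k(\alpha_{ik}^t)^2\sigma_k^2$; together with $\tfrac{\smoothness(\eta_i^t)^2}{2}\le \eta_i^t\smoothness$ (valid in the small step-size regime) this already produces the variance term $\eta_i^t\smoothness\sum_k\sigma_k^2(\alpha_{ik}^t)^2$.

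It then remains to bound the deterministic quantity $-\pdtscl{\nabla R_i(\theta_i^{t-1})}{\bar g} + \tfrac{\smoothness\eta_i^t}{2}\sqrdnrm{\bar g}$ by the target $\tfrac12\sum_k\alpha_{ik}^t\sqrdnrm{\delta_k} - \tfrac{A}{2}\sqrdnrm{\nabla R_i(\theta_i^{t-1})}$, where $\delta_k := \nabla R_k(\theta_i^{t-1}) - \nabla R_i(\theta_i^{t-1})$ are the per-client gradient discrepancies and $A := \sum_k\alpha_{ik}^t$. The natural move is to write $\bar g = A\,\nabla R_i(\theta_i^{t-1}) + S$ with $S := \sum_k\alpha_{ik}^t\delta_k$. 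Expanding, and moving the target to the left, reduces the claim to a single scalar inequality in which $\sqrdnrm{\nabla R_i(\theta_i^{t-1})}$ carries coefficient $\tfrac{A}{2}(\smoothness\eta_i^t A - 1)$, the cross term $\pdtscl{\nabla R_i(\theta_i^{t-1})}{S}$ carries $(\smoothness\eta_i^t A - 1)$, and $\sqrdnrm{S}$ carries $\tfrac{\smoothness\eta_i^t}{2}$, against a remaining $-\tfrac12\sum_k\alpha_{ik}^t\sqrdnrm{\delta_k}$.

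The main obstacle is the indefinite-sign cross term $\pdtscl{\nabla R_i(\theta_i^{t-1})}{S}$, which couples the local gradient to the aggregated bias. I would neutralise it by completing the square: since $\smoothness\eta_i^t A - 1 < 0$ \emph{exactly} under the stated step-size condition $\eta_i^t < (\smoothness A)^{-1}$, the $\sqrdnrm{\nabla R_i}$ and cross terms combine into
\[
-\tfrac{A(1-\smoothness\eta_i^t A)}{2}\,\sqrdnrm{\nabla R_i(\theta_i^{t-1}) + S/A} + \tfrac{1-\smoothness\eta_i^t A}{2A}\,\sqrdnrm{S},
\]
whose leading square is non-positive and may be dropped. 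What remains is $\bigpar{\tfrac{1-\smoothness\eta_i^t A}{2A} + \tfrac{\smoothness\eta_i^t}{2}}\sqrdnrm{S}$, and the point of the threshold $(\smoothness A)^{-1}$ is that this coefficient collapses to exactly $\tfrac{1}{2A}$. Finally the convexity bound $\sqrdnrm{S} = \sqrdnrm{\sum_k\alpha_{ik}^t\delta_k}\le A\sum_k\alpha_{ik}^t\sqrdnrm{\delta_k}$ (Jensen / Cauchy--Schwarz, using $A=\sum_k\alpha_{ik}^t$) converts $\tfrac{1}{2A}\sqrdnrm{S}$ into $\tfrac12\sum_k\alpha_{ik}^t\sqrdnrm{\delta_k}$, matching the bias term and closing the bound. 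The only delicate points I anticipate are justifying the cross-client independence invoked in the variance split, and the bookkeeping of the conditional expectation on the gradient-norm term: the clean argument produces $\sqrdnrm{\nabla R_i(\theta_i^{t-1})}$, so the stated term at the current iterate $\theta_i^t$ should be read with an implicit $\E[\,\cdot\mid\theta_i^{t-1}]$ and, if genuinely intended at $\theta_i^t$, related back to $\theta_i^{t-1}$ by one further use of \Cref{asu:smoothness}.
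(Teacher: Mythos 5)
Your proof is correct and takes essentially the same route as the paper's: your explicit completion of the square is exactly the polarization identity $\pdtscl{a}{b}=\tfrac{1}{2}(\sqrdnrm{a}+\sqrdnrm{b}-\sqrdnrm{a-b})$ that the paper applies to $\nabla R_{\alpha_i}$ and $\nabla R_i$ (indeed $\nabla R_i + S/A = \bar g/A = \nabla R_{\alpha_i}$, so the non-positive square you drop under the step-size condition is the paper's $-\tfrac{\eta_i^t \mathcal{A}}{2}(1-\eta_i^t\mathcal{A}\smoothness)\sqrdnrm{\nabla R_{\alpha_i}}$ term), and both arguments finish with the same independence-based variance split and the same Jensen step $\sqrdnrm{S}\leq A\sum_k \alpha_{ik}^t\sqrdnrm{\delta_k}$. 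Your closing caveat is also accurate: the proof yields $\sqrdnrm{\nabla R_i(\theta_i^{t-1})}$, so the $\theta_i^t$ appearing in the lemma statement is a typo rather than a term requiring an extra smoothness argument.
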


The proof is provided in \Cref{proof:descent_lemma}.
Notice that our analyses does not require to have weights summing to $1$. This descent lemma results to bound the excess loss of client $i$, in line with \citep{chayti2021linear,even2022sample}, rather than the average excess loss across clients as in \citep{li2021ditto,deng2020adaptive,hashemi2024cobo}.

The core of our analysis lies in the appearance of the term $\|\nabla R_{k}(\theta^{t-1}_i) - \nabla R_{i}(\theta^{t-1}_i)\|^2$. Intuitively, collaboration should occur only when this discrepancy is small compared to $\|\nabla R_i(\theta^{t-1}_i)\|^2$; otherwise, client $i$ should not incorporate client $k$'s update and set $\alpha_{ik} = 0$. From this intuition flows the collaboration criterion formalized in the next section.  The final term corresponds to the gradient variance, which is standard but is here reduced by the collaboration weights $\alpha_{ik}^t$.

To allow collaboration, we deduce from \Cref{lem:descent_lemma} that the norm of the pairwise gradients difference must remain sufficiently small. The following bounded heterogeneity assumption ensures a control over this difference and aligns with the collaboration criterion proposed in \Cref{sec:adaptive_collaboration}.

\begin{assumption}[Bounded Heterogeneity]
	\label{asu:bounded_heterogeneity}
	For each couple of clients $i$ and $k$ in $\{1,\ldots,N\}$, there exist $b_{ik}\geq 0$ and $c_{ik}\geq 0$ s.t. for all $\theta$ in $\R^d$: $\SqrdNrm{ \nabla R_{i}(\theta)- \nabla R_{k}(\theta)}\leq b_{ik}^2 + c_{ik}\SqrdNrm{\nabla R_{i}(\theta)}$\;. 
\end{assumption}

This assumption quantifies the gradient dissimilarity between clients: the deviation between $\nabla R_i$ and $\nabla R_k$ is bounded by an additive bias term $b_{ik}^2$, that captures gradient misalignment (e.g., due to data distribution shift), and a multiplicative term $c_{ik} \SqrdNrm{\nabla R_i(\theta)}$, which scales with the local gradient magnitude. This affine formulation generalizes prior uniformly-bounded assumptions ($c_{ik}=0$) such as in \citet{even2022sample}. \citet{chayti2021linear} also consider it, however, they require an additional assumption characterizing client dissimilarity to remove the resulting bias (see their Theorems 4.1 and 5.1), which is not needed in our analysis. A similar distinction in two cases --  whether clients are collaborative or not --- is also made in \citet{hashemi2024cobo,werner2023provably}. These two articles further assume the existence of latent clusters of clients sharing identical stationary points, which is something we don't do. Below, we explicit \Cref{asu:bounded_heterogeneity} in the case of quadratic functions.

\begin{property}[Heterogeneity assumption in the case of quadratic functions]
	\label{prop:quadratic_fn}
	For any $i$ in $\OneToN$, in the case of quadratic function of the form $R_{i} : \theta \mapsto (\theta + \xi_i)^T A_i (\theta + \xi_i)$, where $\xi_i$ in $\R^d$ and $A_i$ in $\R^{d\times d}$, we have explicit bounds to quantify heterogeneity:
	\begin{align*}
		b_{ik}  \leq \sqrt{2} \|A_k ( \xi_i - \xi_k) \|_2 \qquad \text{and} \qquad c_{ik} \leq 2 \sqrdnrm{I - A_k A_i^{-1}}_2 \,.
	\end{align*}
	
\end{property}

This property illustrates the interest of our assumption over the uniformly bounded one ($c_{ik}=0$) from \citet{even2022sample}: two different quadratic functions, even if they share the same minimum, do not have uniformly bounded gradient difference; they however satisfy our assumption with $b_{ik} = 0$. 

Finally, we consider either the strongly-convex or the Polyak-\Lojasiewicz~settings, except in \Cref{thm:convergence_non_cvx} where we only consider the smooth non-convex scenario.

\begin{assumption}[Strong convexity]
	\label{asu:strongly_convex}
	For each client $i$ in $\OneToN$, $R_{i}$ is $\mu$-strongly convex, that is for all vectors $\theta, \theta'$ in $\R^d$:
	$R_{i}(\theta') \geq R_{i}(\theta) + (\theta' -\theta)^T \nabla R_{i}(\theta) + \frac{\mu}{2} \| \theta' - \theta \|^2_2\,.$
	\gs
\end{assumption}

\begin{assumption}[$\mu$-Polyak-\Lojasiewicz~(PL) condition]
	\label{asu:PL}
	For each client $i$ in $\OneToN$, $R_{i}$ satisfies the $\mu$-Polyak-\Lojasiewicz~(PL) condition with $\mu > 0$, if for all $\theta \in \R^d$, $\frac{1}{2} \|\nabla R_{i}(\theta)\|^2 \geq \mu \bigpar{R_{i}(\theta) - R_{i}(\theta_i^\star)}$ 
	where $\theta^\star_i \in \mathrm{argmin}_\theta R_{i}(\theta)$.
\end{assumption}

\section{Adaptive collaboration and guarantees of convergence}
\label{sec:adaptive_collaboration}

In this section, we provide a class of collaboration weights that allow for efficient collaboration and simple theoretical analysis.
In particular, optimizing the upper bound in \Cref{lem:descent_lemma} shows that collaboration weights should be zero if $\sqrdnrm{\nabla R_{k}(\theta^{t-1}_i) - \nabla R_{i}(\theta^{t-1}_i)}\geq \sqrdnrm{\nabla R_{i}}$, and proportional to $(\sqrdnrm{\nabla R_{i})} - \sqrdnrm{\nabla R_{k}(\theta^{t-1}_i) - \nabla R_{i}(\theta^{t-1}_i)})/\sigma_k^2$ otherwise. We thus denote as \emph{similarity ratio} at time $t$ the quantity $r_{ik}^t :=\bigpar{ 1 -\frac{\sqrdnrm{\nabla R_{k}(\theta^{t-1}_i) - \nabla R_{i}(\theta^{t-1}_i)}}{\sqrdnrm{\nabla R_{i}(\theta^{t-1}_i)}}}_+$, and consider the following weights.

\begin{definition}[Collaboration weights]
	\label{def:weights_binary} 
	Let $\phi: [0, 1]  \mapsto [0, 1]$ be non-decreasing \emph{criterion function} s.t. $\phi(0) = 0$ and $\phi(x) \leq x$. Then, we consider the weights~$
	\alpha_{ik}^{t} = \phi(r_{ik}^t) (\sigma_{i, \psi}^t / \sigma_k)^2$, 
	where $\psi : x \mapsto x \phi(x)$, and for any function $f$, we denote $\sigma^t_{i, f} \coloneqq ( \sum_{k=1}^N \sigma_k^{-2} f(r_{ik}) )^{-1/2}$.
\end{definition}

While the formulation of $\alpha_{ik}$ may initially appear complex, its particular form will allow for simple upper bounds (see \Cref{thm:descent_lemma_with_N_i_t}). It recovers several existing settings as special cases and satisfies desirable theoretical properties. Further details are provided in \Cref{sec:application} where we will instantiate this general framework with two explicit choices for $\phi$, namely a \emph{binary} and a \emph{continuous} criterion. 

To give some intuition on the above definition, observe that the weights are an increasing function of the similarity ratio, and inversely proportional to the gradient variance. This comes from the fact that clients with low gradient variance $\sigma_k$ can be seen as more reliable and thus receive higher weights. The similarity ratio $r_{ik}$ measures how far client $k$'s gradient is from client $i$'s gradient, relatively to $i$'s own gradient norm. If this normalized squared distance is small, then client $k$ is considered informative and close enough to help client $i$, otherwise, $\alpha_{ik}^t=0$, meaning no collaboration. Notice that as the loss of client $i$ gets better optimized, its gradient norm decreases and less collaborations are made. 

\begin{remark}
	\Cref{def:weights_binary} requires to know the variance of each clients, however, assume that $\sigma_k = \sigma / \sqrt{n_k}$, $n_k$ being the batch size of client $k$ at each iteration, then $ (\sigma_{i, \psi}^t / \sigma_k)^2 $ is simply equal to $(\frac{1}{n_k} \sum_{j \in \mathcal{N}_i^t} n_j \psi(r_{ij}))^{-1}$ which allows in practice to compute the weights while taking into account the clients' size heterogeneity. It also means that among similar clients, those with more stable gradients contribute more heavily, improving variance reduction. 
\end{remark}

Based on this collaboration criterion and on descent \Cref{lem:descent_lemma}, we derive the following convergence bound on the excess loss at time $T$ in $\N^\star$. The proof is given in \Cref{app:subsec:proof_cvgce}.

\begin{theorem}
	\label{thm:descent_lemma_with_N_i_t}
	Let \allforone~be run on client $i$ in $\OneToN$ for $T$ in $\N$ iterations and collaborating as in \Cref{def:weights_binary}. Under \Cref{asu:bounded_variance,asu:smoothness}, taking a \emph{constant} step-size $\eta_i  <  (\sigma_{i, \phi}^t/ \sigma_{i, \psi}^t)^2 \smoothness^{-1}$, we have:
	\begin{align*}
		\FullExpec{\varepsilon_{i}^T} \leq \bigpar{1 - \eta_i \mu  }^T \varepsilon_{i}^0 + \frac{\beta \eta_i^2}{2} \sum_{t=1}^T (1-\eta_i \mu)^{T-t}(\sigma_{i, \psi}^t)^2 \,.
	\end{align*}
\end{theorem}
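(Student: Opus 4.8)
The plan is to convert \Cref{lem:descent_lemma} into a one-step contraction of the form $\E[\varepsilon_i^t\mid\theta_i^{t-1}]\le(1-\eta_i\mu)\varepsilon_i^{t-1}+\tfrac{\beta\eta_i^2}{2}(\sigma_{i,\psi}^t)^2$ and then unroll it geometrically. A useful first observation is that the step-size condition stated in the theorem is exactly the condition of the descent lemma rewritten through the weights: since $\sum_{k}\alpha_{ik}^t=(\sigma_{i,\psi}^t)^2\sum_k\sigma_k^{-2}\phi(r_{ik}^t)=(\sigma_{i,\psi}^t/\sigma_{i,\phi}^t)^2$, the lemma's requirement $\eta_i^t<(\beta\sum_k\alpha_{ik})^{-1}$ becomes precisely $\eta_i<(\sigma_{i,\phi}^t/\sigma_{i,\psi}^t)^2\beta^{-1}$. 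Thus \Cref{lem:descent_lemma} applies verbatim with the weights of \Cref{def:weights_binary}, and it suffices to simplify its right-hand side.

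The crux is the ``signal'' term $\tfrac{\eta_i^t}{2}\sum_k\alpha_{ik}^t(\sqrdnrm{\nabla R_k(\theta_i^{t-1})-\nabla R_i(\theta_i^{t-1})}-\sqrdnrm{\nabla R_i(\theta_i^{t-1})})$, which I would treat summand by summand. Whenever the clamped ratio satisfies $r_{ik}^t=0$ we have $\sqrdnrm{\nabla R_k-\nabla R_i}\ge\sqrdnrm{\nabla R_i}$, but then $\phi(r_{ik}^t)=\phi(0)=0$ forces $\alpha_{ik}^t=0$ and the summand vanishes; whenever $r_{ik}^t>0$ the very definition of $r_{ik}^t$ gives $\sqrdnrm{\nabla R_k-\nabla R_i}-\sqrdnrm{\nabla R_i}=-r_{ik}^t\sqrdnrm{\nabla R_i}$. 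In both cases the $k$-th summand equals $-\alpha_{ik}^t r_{ik}^t\sqrdnrm{\nabla R_i(\theta_i^{t-1})}$. Substituting the weights and using $\psi(x)=x\phi(x)$ together with the definition of $\sigma_{i,\psi}^t$ produces the key cancellation $\sum_k\alpha_{ik}^t r_{ik}^t=(\sigma_{i,\psi}^t)^2\sum_k\sigma_k^{-2}\psi(r_{ik}^t)=1$, so the whole signal term collapses to exactly $-\tfrac{\eta_i}{2}\sqrdnrm{\nabla R_i(\theta_i^{t-1})}$, independently of the choice of $\phi$. This exact normalization to $1$ is precisely what the $(\sigma_{i,\psi}^t/\sigma_k)^2$ prefactor in \Cref{def:weights_binary} is engineered to achieve, and it is the step I expect to be the conceptual heart of the argument.

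For the noise term I would invoke the structural inequality $\phi(x)^2\le x\phi(x)=\psi(x)$, which holds because $0\le\phi(x)\le x$. Then $\sum_k\sigma_k^2(\alpha_{ik}^t)^2=(\sigma_{i,\psi}^t)^4\sum_k\sigma_k^{-2}\phi(r_{ik}^t)^2\le(\sigma_{i,\psi}^t)^4\sum_k\sigma_k^{-2}\psi(r_{ik}^t)=(\sigma_{i,\psi}^t)^2$, bounding the variance contribution by $\tfrac{\beta\eta_i^2}{2}(\sigma_{i,\psi}^t)^2$. Combining the two pieces gives $\E[\varepsilon_i^t\mid\theta_i^{t-1}]\le\varepsilon_i^{t-1}-\tfrac{\eta_i}{2}\sqrdnrm{\nabla R_i(\theta_i^{t-1})}+\tfrac{\beta\eta_i^2}{2}(\sigma_{i,\psi}^t)^2$; applying the PL inequality of \Cref{asu:PL} (also implied by the strong convexity of \Cref{asu:strongly_convex}) pointwise, $\tfrac12\sqrdnrm{\nabla R_i(\theta_i^{t-1})}\ge\mu\,(R_i(\theta_i^{t-1})-R_i(\theta_i^\star))$, and taking total expectation yields the one-step recursion $\varepsilon_i^t\le(1-\eta_i\mu)\varepsilon_i^{t-1}+\tfrac{\beta\eta_i^2}{2}(\sigma_{i,\psi}^t)^2$; unrolling it from $t=1$ to $T$ gives the claimed bound. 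The remaining checks are routine: that $1-\eta_i\mu\in[0,1)$, which follows from $\mu\le\beta$ together with $(\sigma_{i,\phi}^t/\sigma_{i,\psi}^t)^2\le1$ (itself a consequence of $\psi\le\phi$ on $[0,1]$), and the measurability bookkeeping that $(\sigma_{i,\psi}^t)^2$ is $\theta_i^{t-1}$-measurable, so keeping it in the conditional bound before taking the outer expectation is legitimate.
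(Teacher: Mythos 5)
Your proof is correct and follows essentially the same route as the paper's: the cancellation $\sum_k \alpha_{ik}^t r_{ik}^t = (\sigma_{i,\psi}^t)^2 \sum_k \sigma_k^{-2}\psi(r_{ik}^t) = 1$ collapsing the signal term to $-\tfrac{\eta_i}{2}\sqrdnrm{\nabla R_i(\theta_i^{t-1})}$, the bound $\phi^2 \le \psi$ controlling the variance term by $(\sigma_{i,\psi}^t)^2$, then the PL/strong-convexity step and geometric unrolling. If anything, your treatment is slightly more careful than the paper's, since you explicitly handle the clamped case $r_{ik}^t = 0$ (where the paper silently drops the $(\cdot)_+$) and verify $1-\eta_i\mu \in [0,1)$.
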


As shown in \Cref{thm:descent_lemma_with_N_i_t}, the \emph{effective variance} governing the convergence is given by $\sigma^t_{i, \mathrm{eff}} \coloneqq \sigma^t_{i, \psi}$. For clarity, we adopt the notation $\sigma^t_{i, \mathrm{eff}}$ in place of $\sigma^t_{i, \psi}$ when referring to the effective variance, as we believe this improves readability.
\Cref{thm:descent_lemma_with_N_i_t} highlights two terms: the exponential decrease of the initial bias $\varepsilon_i^0$, and the effective variance induced by the clients collaborating through the training. This rate of convergence precisely takes into account the impact of each client at each iteration. 

Because the effective variance $(\sigma_{i, \mathrm{eff}}^t)^2$ depends on $(r_{ik}^t)_{k\in \OneToN}$, it changes at each iteration, and therefore we cannot quantify precisely the speed-up induced by collaboration. To this aim, and using \Cref{asu:bounded_heterogeneity}, we introduce the notion of \textit{sufficient cluster} and \emph{sufficient variance}. Those are fixed in time, allowing to derive a looser but simpler upper-bound, and exhibit a sufficient set of clients whose collaboration allows reaching an optimization error smaller than $\varepsilon$, referred to as \emph{precision}, with a quantifiable speed-up.  

\begin{definition}[Sufficient cluster and variance at precision $\varepsilon$]
	\label{def:effectif_cluster_var}
	For a client $i$ in $\OneToN$, for a given final precision $\varepsilon$ in $\R_+$, we define the \emph{sufficient cluster} of clients collaborating with client $i$ to reach a precision $\varepsilon$ as $\mathcal{N}_i^\star(\varepsilon) \coloneqq \{k \in \OneToN \mid \psi((1 - \frac{b_{ik}}{2 \mu \varepsilon} - c_{ik})_+) > 0\}$. And the \emph{sufficient variance} is $\sigma_{i,{\mathrm{suf}}}^2(\varepsilon) \coloneqq (\sum_{k\in \OneToN} \sigma_k^{-2}\psi((1 - \frac{b_{ik}}{2 \mu \varepsilon} - c_{ik})_+))^{-1}$.
\end{definition}

\begin{wrapfigure}[11]{R}{0.5\linewidth}
	\centering
	\vspace{-0.5cm}
	\begin{subfigure}{\linewidth}
		\includegraphics[width=\linewidth]{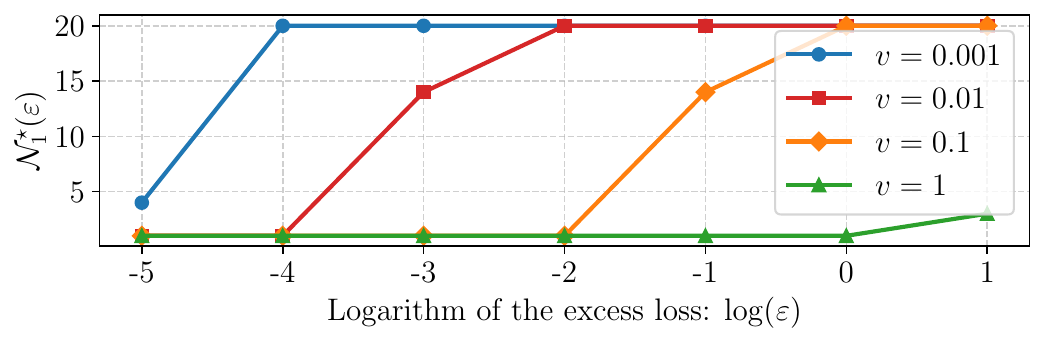}
	\end{subfigure}
	\caption{\label{fig:sufficient_cluster} Size of the sufficient cluster $\mathcal{N}_1^\star(\varepsilon)$  using synthetic data ($N=20$) and the binary collaboration function $\phi_{\mathrm{bin}}$ defined in \Cref{sec:application}. The variance $v$ of models $(\theta^\star_i)_{i\in\OneToN}$ is defined in \Cref{app:sec:experiments}: lower $v$ means lower heterogeneity.}
\end{wrapfigure}

\begin{remark}
	The sufficient cluster is purely theoretical and does not corresponds in practice to the clients selected by our schema from \cref{def:weights_binary}.
\end{remark}

Notice that the function $\sigma_{i,{\mathrm{suf}}}^2$ is decreasing and that it admits a limit as $\varepsilon$ decreases. Indeed, $\lim_{\varepsilon \to 0} \sigma_{i,{\mathrm{suf}}}^2(\varepsilon) = (\sum_{k\in \OneToN} \sigma_k^2 \psi((1- c_{ik})_+) \mathbb{1}_{b_{ik} = 0})^{-1}$ -- it corresponds to the case when there is collaboration only with client s.t. $b_{ik} = 0$ and $c_{ik} < 1$ -- and $\lim_{\varepsilon \to \infty} \sigma_{i,{\mathrm{cont}}}^2(\varepsilon) = (\sum_{k=1}^N \frac{1}{\sigma_k^2} \psi((1 - c_{ik})_+))^{-1}$. Observe that if $c_{ik} \geq 1$, a client will be excluded to the sufficient cluster, while it may be included in the effective cluster and help to reduce the variance. This is why, it corresponds to a cluster giving a sufficient but not necessary condition to reach a precision.

To provide intuition on the sufficient cluster, \Cref{fig:sufficient_cluster} illustrates how the sufficient cluster size evolves with the target excess loss $\varepsilon$, based on synthetic data (\Cref{app:sec:experiments}). Two observations emerge:
\begin{itemize}[leftmargin=*, itemsep=0pt, parsep=0pt, partopsep=0pt]
	\item The sufficient cluster size varies with the desired level of precision: while more clients participate at a lower precision level, achieving higher precision (with smaller $\varepsilon$) however  necessitate a change in the cluster composition and to collaborate with less clients.  
	\item The sufficient cluster size varies with the heterogeneity level. Under high heterogeneity (green line, $v = 1$), collaboration immediately vanishes as the excess loss $\varepsilon$ decreases, i.e. at the very beginning of the optimization. On the other side, with low heterogeneity (blue line, $v=0.001$), global collaboration remains beneficial throughout the optimization process, except for very small $\varepsilon$. In the intermediate regime (red and orange line), collaboration evolves over time: at low precision (high $\varepsilon$), it is global, the progressively becomes limited to similar clients before fully transitioning to individual fine-tuning. 
\end{itemize}

This concept allow us to derive, from \Cref{thm:descent_lemma_with_N_i_t}, a new upper bound on the convergence rate which directly depends on the sufficient variance as shown in \Cref{lem:nested_set_of_collaborating_clients}.

\begin{lemma}
	\label{lem:nested_set_of_collaborating_clients}
	Consider strong-convexity (\Cref{asu:strongly_convex}) or PL-condition (\Cref{asu:PL}), for any client $i$ in $\OneToN$, for any $t$ in $\N$, for any $\varepsilon$ in $\R_+^\star$ s.t. $\varepsilon \leq \varepsilon_{i}^t$, then  we have $\mathcal{N}_i^\star(\varepsilon) \subset \mathcal{N}_i^\star(\varepsilon_{i}^t) \subset \mathcal{N}_i^t$ and $ \sigma_{i,{\mathrm{eff}}}^t \leq \sigma_{i,{\mathrm{suf}}}(\varepsilon_i^t) \leq \sigma_{i,{\mathrm{suf}}}(\varepsilon)$.
\end{lemma}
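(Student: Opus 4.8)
The plan is to split the statement into two symmetric halves that mirror each other. The first half consists of $\mathcal{N}_i^\star(\varepsilon) \subset \mathcal{N}_i^\star(\varepsilon_i^t)$ together with $\sigma_{i,\mathrm{suf}}(\varepsilon_i^t) \le \sigma_{i,\mathrm{suf}}(\varepsilon)$; this is a pure monotonicity-in-$\varepsilon$ fact coming only from \Cref{def:effectif_cluster_var}. The second half consists of $\mathcal{N}_i^\star(\varepsilon_i^t) \subset \mathcal{N}_i^t$ together with $\sigma_{i,\mathrm{eff}}^t \le \sigma_{i,\mathrm{suf}}(\varepsilon_i^t)$; this is where \Cref{asu:bounded_heterogeneity} and the curvature condition enter. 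In each half, the set inclusion and the variance inequality are two readings of the same term-wise comparison: since $\sigma_{i,f}^{-2} = \sum_{k} \sigma_k^{-2} f(\cdot)$ is monotone in each summand and the associated cluster is exactly the support of those summands (note $\psi(y)>0 \Leftrightarrow \phi(y)>0$ because $\psi(y)=y\phi(y)$ and $\phi(0)=0$), it suffices to prove, for every fixed $k$, an inequality between the arguments fed into $\psi$.

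For the first half, I would observe that $\varepsilon \mapsto \bigpar{1 - \frac{b_{ik}}{2\mu\varepsilon} - c_{ik}}_+$ is non-decreasing on $\R_+^\star$, since $b_{ik}\ge 0$ makes $-b_{ik}/(2\mu\varepsilon)$ increasing in $\varepsilon$ and $(\cdot)_+$ preserves monotonicity. Composing with $\psi:x\mapsto x\phi(x)$, which is non-decreasing on $[0,1]$ as a product of two non-negative non-decreasing factors ($\phi$ non-decreasing with $\phi(0)=0$), and using $\varepsilon \le \varepsilon_i^t$, I get $\psi\bigpar{(1 - \frac{b_{ik}}{2\mu\varepsilon} - c_{ik})_+} \le \psi\bigpar{(1 - \frac{b_{ik}}{2\mu\varepsilon_i^t} - c_{ik})_+}$ for every $k$. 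Positivity of the left-hand side forces positivity of the right-hand side, giving $\mathcal{N}_i^\star(\varepsilon) \subset \mathcal{N}_i^\star(\varepsilon_i^t)$; summing the weighted terms yields $\sigma_{i,\mathrm{suf}}^{-2}(\varepsilon) \le \sigma_{i,\mathrm{suf}}^{-2}(\varepsilon_i^t)$, i.e. $\sigma_{i,\mathrm{suf}}(\varepsilon_i^t) \le \sigma_{i,\mathrm{suf}}(\varepsilon)$ after inverting and taking square roots.

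For the second half, the target is the term-wise bound $r_{ik}^t \ge \bigpar{1 - \frac{b_{ik}^2}{2\mu\varepsilon_i^t} - c_{ik}}_+$, after which the same monotonicity of $\psi$ and $\phi$ closes everything exactly as above. I would start from the definition of $r_{ik}^t$ and apply \Cref{asu:bounded_heterogeneity} to its numerator, giving $r_{ik}^t \ge \bigpar{1 - \frac{b_{ik}^2}{\sqrdnrm{\nabla R_i(\theta_i^{t-1})}} - c_{ik}}_+$. It then remains to lower-bound the realized gradient norm by the excess loss: under \Cref{asu:PL} (or \Cref{asu:strongly_convex}, which implies it) one has $\sqrdnrm{\nabla R_i(\theta_i^{t-1})} \ge 2\mu\,(R_i(\theta_i^{t-1}) - R_i(\theta_i^\star))$, and combining this with the monotone decrease of the excess loss along \allforone\ lets one control the denominator by $2\mu\varepsilon_i^t$, yielding the target.

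The main obstacle is precisely this last substitution. The similarity ratio $r_{ik}^t$ is evaluated at the random iterate $\theta_i^{t-1}$, whereas the sufficient cluster is phrased through the excess loss $\varepsilon_i^t$, so reconciling the two requires care about (i) the time shift from $t-1$ to $t$, which I would absorb using that $\varepsilon_i^{\cdot}$ is non-increasing along the trajectory, (ii) whether the comparison is meant pointwise or in expectation, which dictates how the curvature lower bound on $\sqrdnrm{\nabla R_i(\theta_i^{t-1})}$ transfers to $2\mu\varepsilon_i^t$, and (iii) matching the $b_{ik}$ normalization of \Cref{def:effectif_cluster_var} with the $b_{ik}^2$ produced by \Cref{asu:bounded_heterogeneity}. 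Everything else — monotonicity of $\psi$ and $\phi$, nonnegativity of $b_{ik}$ and $c_{ik}$, and the passage from term-wise inequalities to the set inclusions and to $\sigma_{i,\mathrm{eff}}^t \le \sigma_{i,\mathrm{suf}}(\varepsilon_i^t)$ — is routine bookkeeping following the template of the first half.
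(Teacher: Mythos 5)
Your proof follows essentially the same route as the paper's: apply \Cref{asu:bounded_heterogeneity} to the numerator of the similarity ratio, lower-bound the gradient norm by $2\mu\varepsilon_i^t \geq 2\mu\varepsilon$ via strong convexity or the PL condition, and conclude both the set inclusions and the variance inequalities from monotonicity of $\psi$ --- your two-halves decomposition is just the paper's single chain of inequalities $r_{ik}^t \geq 1 - \frac{b_{ik}^2}{2\mu\varepsilon_i^t} - c_{ik} \geq 1 - \frac{b_{ik}^2}{2\mu\varepsilon} - c_{ik}$ read in two pieces. The three ``obstacles'' you flag ((i) the $t{-}1$ versus $t$ iterate index, (ii) pointwise versus in-expectation comparison of the gradient norm with $\varepsilon_i^t$, and (iii) the $b_{ik}$ versus $b_{ik}^2$ normalization) are genuine infelicities, but the paper's own proof silently glosses over all three, so your attempt is no less rigorous than the original.
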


\begin{proof}
	Let a client $i$ in $\OneToN$, $t$ in $\N$, any $\varepsilon$ in $\R_+^\star$ s.t. $\varepsilon \leq \varepsilon_{i}^t$.
	Then $\sqrdnrm{\nabla R_{i}(\theta^t_i) - \nabla R_{k}(\theta^t_i)} \leq b_{ik}^2 + c_{ik} \sqrdnrm{\nabla R_{i}(\theta^t_k)}$ (\Cref{asu:bounded_heterogeneity}) and by strong-convexity (\Cref{asu:strongly_convex}) or PL-condition (\Cref{asu:PL}), $\sqrdnrm{\nabla R_{i}(\theta^t_i)} \geq 2 \mu \varepsilon_{i}^t \geq 2 \mu \varepsilon$.
	Therefore, we have:
	$$ 1 -\frac{\sqrdnrm{\nabla R_{i}(\theta^t_i) - \nabla R_{k}(\theta^t_i)}}{\sqrdnrm{\nabla R_{i}(\theta^t_i)}} \geq 1 - \frac{b_{ik}^2 + c_{ik} \sqrdnrm{\nabla R_{i}(\theta^t_k)}}{\sqrdnrm{\nabla R_{i}(\theta^t_i)}} \geq 1 - \frac{b_{ik}^2}{2 \mu \varepsilon_{i}^t} - c_{ik} \geq 1 -\frac{b_{ik}^2}{2 \mu \varepsilon} - c_{ik}\,,$$
	which means that every element in $\mathcal{N}_i^\star(\varepsilon)$ is also in $\mathcal{N}_i^t$, however the converse is not necessarily true. And the result on the variance is obtained from the ratio inequality because $\psi$ is a non-negative and non-decreasing function.
\end{proof}

Building on \Cref{thm:descent_lemma_with_N_i_t,def:effectif_cluster_var,lem:nested_set_of_collaborating_clients}, we present three convergence rate for different step-sizes, identifying the linear and sub-linear regimes of convergence. 

\begin{theorem}
	\label{thm:convergence}
	Consider \Cref{asu:bounded_variance,asu:smoothness,asu:bounded_heterogeneity}, and the loss function to be either strongly-convex (\Cref{asu:strongly_convex}) or with the PL-condition in the non-convex setting (\Cref{asu:PL}). Let any precision level $\varepsilon>0$, then after running \allforone~for   $T$ iterations in $\N^\star$ s.t. $\varepsilon_i^{T-1}\geq \varepsilon$, we have the following upper-bounds on $\fullexpec{\varepsilon_{i}^T}$ for constant, horizon-dependant or decreasing step-sizes s.t. $\eta_{ik}^t  \leq (\sigma_{i, \phi}^t/ \sigma_{i, \psi}^t)^2 \beta^{-1}$.
	
	\begin{table}[H]
		\centering
		\caption{Upper-bounds on the excess loss $\fullexpec{\varepsilon_{i}^T}$ for three different step-size.}
		\begin{tabular}{l|l}
			\toprule
			Step size at iteration $t$ in $\N$ & Upper-bound\\
			\midrule
			Constant $\eta_i^t \leq \mu^{-1}$ & $\fullexpec{\varepsilon_{i}^T} \leq\bigpar{1 - \eta \mu }^T \varepsilon_{i}^0 + \ffrac{\eta \smoothness}{2 \mu} \sigma_{i,{\mathrm{suf}}}^2(\varepsilon)$ \\
			Horizon-dep. $\eta_i^t = \frac{1}{\mu T} \ln \bigpar{\frac{2 T \mu^2 \varepsilon_i^0}{\smoothness \sigma_{i,{\mathrm{suf}}}^2(\varepsilon)}} \leq \mu^{-1}$ &  $\fullexpec{\varepsilon_{i}^T} \leq \ffrac{\smoothness \sigma_{i,{\mathrm{suf}}}^2(\varepsilon)}{2 \mu^2 T} \bigpar{\ln\bigpar{\frac{2 T \mu^2 \varepsilon_{i}^0}{\sigma_{i,{\mathrm{suf}}}^2(\varepsilon)}} + 1}$ \\
			Decreasing $\eta_i^t = \frac{C}{\mu t}$ with $C > 1$ & $\fullexpec{\varepsilon_{i}^T} \leq\ffrac{1}{T} \max\bigpar{\ffrac{\smoothness \sigma_{i, \mathrm{suf}}^2(\varepsilon) C^2}{2 \mu^2 ( C - 1)}, \varepsilon_{i}^0}$ \\
			\bottomrule
		\end{tabular}
		\label{tab:summary_res_convergence}
	\end{table}
\end{theorem}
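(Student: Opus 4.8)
The plan is to collapse all three rows of \Cref{tab:summary_res_convergence} onto a single per-iteration contraction and then merely specialize the step-size. First I would extract the one-step recursion that already powers \Cref{thm:descent_lemma_with_N_i_t}. Substituting the weights of \Cref{def:weights_binary} into \Cref{lem:descent_lemma}, the cross term telescopes: because $\sum_{k\in\OneToN}\sigma_k^{-2}\psi(r_{ik}^t)=(\sigma_{i,\psi}^t)^{-2}$ and $\psi(x)=x\phi(x)$, one gets $\sum_k\alpha_{ik}^t r_{ik}^t=1$, hence $\sum_k\alpha_{ik}^t\bigpar{\sqrdnrm{\nabla R_k(\theta_i^{t-1})-\nabla R_i(\theta_i^{t-1})}-\sqrdnrm{\nabla R_i(\theta_i^{t-1})}}=-\sqrdnrm{\nabla R_i(\theta_i^{t-1})}$; simultaneously $\phi(x)^2\le x\phi(x)=\psi(x)$ yields $\sum_k\sigma_k^2(\alpha_{ik}^t)^2\le(\sigma_{i,\psi}^t)^2$, which controls the variance term. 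Invoking strong convexity (\Cref{asu:strongly_convex}) or the PL inequality (\Cref{asu:PL}) through $\sqrdnrm{\nabla R_i(\theta_i^{t-1})}\ge 2\mu\varepsilon_i^{t-1}$ and taking the full expectation, I obtain, for every admissible (possibly time-varying) step-size,
\begin{equation*}
	\varepsilon_i^t \le (1-\eta_i^t\mu)\,\varepsilon_i^{t-1} + \tfrac{\beta(\eta_i^t)^2}{2}(\sigma_{i,\psi}^t)^2 .
\end{equation*}

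Next I would freeze the variance. In the regime preceding precision $\varepsilon$ the excess loss is non-increasing, so the hypothesis $\varepsilon_i^{T-1}\ge\varepsilon$ gives $\varepsilon_i^{t}\ge\varepsilon$ for all $t\le T-1$; \Cref{lem:nested_set_of_collaborating_clients} then replaces each data-dependent $\sigma_{i,\psi}^t=\sigma_{i,\mathrm{eff}}^t$ by the fixed $\sigma_{i,\mathrm{suf}}(\varepsilon)$ of \Cref{def:effectif_cluster_var}. This delivers the uniform recursion $\varepsilon_i^t\le(1-\eta_i^t\mu)\varepsilon_i^{t-1}+\tfrac{\beta(\eta_i^t)^2}{2}\sigma_{i,\mathrm{suf}}^2(\varepsilon)$, the common root of all three rows.

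For the constant step-size I would unroll this exactly as in \Cref{thm:descent_lemma_with_N_i_t} and bound the geometric sum $\sum_{t=1}^T(1-\eta_i\mu)^{T-t}\le(\eta_i\mu)^{-1}$ (here $\eta_i\le\mu^{-1}$ keeps the factor in $[0,1]$), turning the $\tfrac{\beta\eta_i^2}{2}$ prefactor into $\tfrac{\beta\eta_i}{2\mu}$ and giving the first row. For the horizon-dependent choice I would substitute the prescribed $\eta_i$ into that bound: $1-x\le e^{-x}$ collapses the bias $(1-\eta_i\mu)^T\varepsilon_i^0$ to $\tfrac{\beta\sigma_{i,\mathrm{suf}}^2(\varepsilon)}{2\mu^2 T}$, while the variance term supplies the $\ln(\cdot)$ factor, so their sum factors as $\tfrac{\beta\sigma_{i,\mathrm{suf}}^2(\varepsilon)}{2\mu^2 T}\bigpar{1+\ln(\cdot)}$, i.e. the second row.

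The decreasing step-size is where the real work lies. With $\eta_i^t=C/(\mu t)$ the recursion reads $\varepsilon_i^t\le(1-\tfrac{C}{t})\varepsilon_i^{t-1}+\tfrac{D}{t^2}$, $D=\tfrac{\beta C^2\sigma_{i,\mathrm{suf}}^2(\varepsilon)}{2\mu^2}$, and I would prove $\varepsilon_i^t\le M/t$ by induction with $M=\max\bigpar{\tfrac{D}{C-1},\varepsilon_i^0}$. For $t\ge C$ the factor $1-\tfrac{C}{t}$ is non-negative, so the hypothesis $\varepsilon_i^{t-1}\le M/(t-1)$ inserts cleanly and the target $\varepsilon_i^t\le M/t$ reduces to $M(C-1)\ge D\tfrac{t-1}{t}$, which holds since $M\ge D/(C-1)$. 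The main obstacle is the burn-in $t<C$, where $1-\tfrac{C}{t}<0$ reverses the monotonicity used in the induction: there I would discard the non-positive contraction term to get $\varepsilon_i^t\le D/t^2$ and lean on the $\max$ with $\varepsilon_i^0$, together with the step-size cap $\eta_i^t\le(\sigma_{i,\phi}^t/\sigma_{i,\psi}^t)^2\beta^{-1}$ of the statement, to anchor the base case at the entry of the contraction regime. Checking that this anchoring is consistent with the stated constant --- rather than requiring a shifted schedule $C/(\mu(t+\gamma))$ --- is the bookkeeping I would scrutinize most.
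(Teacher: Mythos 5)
You take essentially the same route as the paper: the two identities following from \Cref{def:weights_binary} (namely $\sum_{k}\alpha_{ik}^t r_{ik}^t=1$ and $\sum_k \sigma_k^2(\alpha_{ik}^t)^2\le(\sigma_{i,\psi}^t)^2$ via $\phi^2\le\psi$) turn \Cref{lem:descent_lemma} into the recursion $\fullexpec{\varepsilon_i^t}\le(1-\eta_i^t\mu)\fullexpec{\varepsilon_i^{t-1}}+\tfrac{\smoothness(\eta_i^t)^2}{2}(\sigma_{i,\psi}^t)^2$, the sufficient variance is substituted via \Cref{lem:nested_set_of_collaborating_clients}, and your constant and horizon-dependent rows are obtained exactly as in the paper (geometric sum bounded by $(\eta\mu)^{-1}$, then $1-x\le e^{-x}$ with the prescribed $\eta_i$). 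The one place you genuinely diverge is the decreasing-step row: the paper does not prove it directly but invokes Theorem 4.7 of \citet{bottou2018optimization} applied to its recursion, whereas you inline that theorem's induction ($\varepsilon_i^t\le M/t$ with $M=\max(D/(C-1),\varepsilon_i^0)$), and your induction step for $t\ge C$ is correct. Moreover, the ``bookkeeping'' you flag is in fact the substantive point, and you are right to be suspicious: Bottou et al.'s Theorem 4.7 is stated for a \emph{shifted} schedule $\beta_{\mathrm{B}}/(\gamma+k)$, with $\gamma$ chosen precisely so that the first step-size obeys the smoothness cap, while the paper's unshifted $C/(\mu t)$ with $C>1$ cannot obey the stated cap $\eta_i^t\le(\sigma_{i,\phi}^t/\sigma_{i,\psi}^t)^2\smoothness^{-1}\le\smoothness^{-1}$ at $t=1$ (since $C/\mu>1/\mu\ge1/\smoothness$), and without that cap \Cref{lem:descent_lemma} --- hence the recursion itself --- is not available during the burn-in $t<C$. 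Your workaround (drop the negative contraction term to get $\varepsilon_i^t\le D/t^2$, anchor the induction at the last burn-in step $t^\star\ge C-1$, where $D/(t^\star)^2\le M/t^\star$ follows from $M\ge D/(C-1)$) does close the induction \emph{conditionally} on the recursion holding there; the clean fix is the shifted schedule, or $\eta_i^t=\min(C/(\mu t),(\sigma_{i,\phi}^t/\sigma_{i,\psi}^t)^2\smoothness^{-1})$, as you anticipate. In short, your proof is more self-contained than the paper's and surfaces a hole that the paper hides inside its citation.

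One further caveat, shared with the paper rather than introduced by you: replacing $\sigma_{i,\mathrm{eff}}^t$ by $\sigma_{i,\mathrm{suf}}(\varepsilon)$ at every iteration requires $\varepsilon_i^{t}\ge\varepsilon$ for all $t\le T-1$, while the theorem only assumes $\varepsilon_i^{T-1}\ge\varepsilon$. Your monotonicity justification is heuristic (the expected excess loss need not decrease monotonically under gradient noise), but the paper performs the same substitution with no justification at all, so this gap is inherited, not created.
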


\begin{remark}
	For any $t$ in $\OneToN$ and any client $i$ in $\OneToN$, the ratio $\sigma_{i, \phi}^t/ \sigma_{i, \psi}^t$ is upper and lower-bounded by some constants, ensuring the possibility to take a constant step-size. Indeed, notice that $\forall x \in [0,1]$ $\psi(x) \leq \phi(x)$, we have $\sigma_{i, \phi}^t \leq \sigma_{i, \psi}^t$. Moreover, for the binary case we have $(\sigma_{i, \phi}^t/ \sigma_{i, \psi}^t)^2 \geq \lambda$ as $\psi(x) \geq \lambda \phi(x)$; and for the continuous case $(\sigma_{i, \phi}^t/ \sigma_{i, \psi}^t)^2 \geq \frac{1}{\sigma_i^{2}} (\sum_{k\in \OneToN} \frac{1}{\sigma_k^{2}})^{-1}$ as $\frac{1}{(\sigma_{i, \psi}^t)^{2}} = (\frac{1}{\sigma_i^2} + \sum_{k\neq i} \frac{1}{\sigma_k^2} \psi(r_{ik})) \geq \frac{1}{\sigma_i^2}$ and $(\sigma_{i, \phi}^t)^2 = (\sum_{k \in \OneToN} \frac{1}{\sigma_k^2} \psi(r_{ik})^{-1} \geq (\sum_{k \in \OneToN} \frac{1}{\sigma_k^2})^{-1} $.
\end{remark}

The proof is placed in \Cref{app:subsec:proof_cvgce}. These results recover classical SGD behavior with the novelty that the impact of personalized collaboration is embedded in the variance term $\sigma_{i, \mathrm{suf}}^2(\varepsilon)$. Several observations and comparisons with previous works can be made:
\begin{itemize}[leftmargin=*, itemsep=0pt, parsep=0pt, partopsep=0pt]
	\item The variance speed-up reduce with the precision $\varepsilon$. Running \allforone~for more iterations decreases $\varepsilon^{T-1}$, leading to a smaller acceptable $\varepsilon$, and consequently to an increased $\sigma_{i, \mathrm{suf}}^2(\varepsilon)$. 
	\item With a \textit{constant step-size}, we obtain a linear convergence rate up to a saturation level proportional to the sufficient variance. This matches classical SGD with constant step-size where variance prevent exacts convergence. With a \textit{decreasing} or \textit{horizon-dependent step-size}, we recover sublinear convergence without saturation.
	\item \citet{even2022sample} obtain a convergence up to a level depending on the heterogeneity forcing them to fix \emph{before the training} the value of the weight. Having dynamic weights allows to remove this dependence. \citet{chayti2021linear} also determine the weight before the training. They obtain (1) an exponential decrease of the bias, (2) a term equivalent to our sufficient variance, and (3) an additional term of bias due to heterogeneity which is removed by introducing a bias correction mechanism. This is why, our algorithm can also be seen as a \emph{bias reduction} method.
	\item Compared to \citet[][Theorem I, Corollary II]{hashemi2024cobo}, who assume the existence of latent clusters of size $c$ and exhibits a $1/c$ speed-up only on the model drift, our formulation yields a $1/\mathcal{N}_i^\star(\varepsilon)$ improvement also on the \emph{excess loss} (term hidden in the sufficient variance). Furthermore, while they focus solely on horizon-dependent step-sizes, we use both constant and decreasing step-size regimes, explicitly highlighting both linear and sublinear convergence. 
\end{itemize}

In the following corollary, we exhibit a bound on the number of iteration necessary to reach a precision $\varepsilon>0$. This type of result is naturally obtained from our notion of sufficient variance, which provides an analysis of the excess loss for precision above $\varepsilon$.

\begin{corollary}[Sampling complexity]
	\label{cor:sampling_complexity_optimality}
	Let $\varepsilon > 0$ and a client $i$ in $\OneToN$, taking a decreasing step-size as in \Cref{thm:convergence}, \allforone~reach a generalisation error $\varepsilon$ after $T_i^\varepsilon = \beta \sigma_{i, \mathrm{suf}}^2(\varepsilon) \mathcal{C} / (2 \mu^2 \varepsilon)$ iterations, where $\mathcal{C} = C^2 / (C - 1)$. Under the additional assumptions that all clients share the same gradient variance $\sigma^2$ and that $c_{ik} = 0$ for all pairs $i,k \in \OneToN$, \Cref{thm:convergence} gives:
	$$T_i^\varepsilon \leq \ffrac{\mathcal{C} \beta \sigma^2}{2 \mu^2 \varepsilon \mathcal{N}_i^\star(\varepsilon) \min_{j \in \mathcal{N}_i^\star(\varepsilon)} \psi(1 - \frac{b_{ik}}{2 \mu \varepsilon}) } \,.
	$$  
\end{corollary}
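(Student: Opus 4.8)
The plan is to invert the decreasing step-size bound of \Cref{thm:convergence} for the iteration count, and then to specialise the sufficient variance to the homogeneous, bias-free setting.

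First I would start from the decreasing step-size row of \Cref{tab:summary_res_convergence}: with $\eta_i^t = C/(\mu t)$ one has $\fullexpec{\varepsilon_{i}^T} \leq \frac{1}{T}\max\bigpar{\frac{\beta\sigma_{i,\mathrm{suf}}^2(\varepsilon)C^2}{2\mu^2(C-1)},\, \varepsilon_{i}^0}$. In the regime where the target precision is small enough that the variance term dominates the initialisation term $\varepsilon_i^0$, the $\max$ equals $\frac{\beta\sigma_{i,\mathrm{suf}}^2(\varepsilon)\mathcal{C}}{2\mu^2}$ with $\mathcal{C}=C^2/(C-1)$. Imposing that this quantity, divided by $T$, be at most $\varepsilon$ and solving for $T$ yields exactly $T_i^\varepsilon = \beta\sigma_{i,\mathrm{suf}}^2(\varepsilon)\mathcal{C}/(2\mu^2\varepsilon)$; a direct substitution confirms that after this many iterations $\fullexpec{\varepsilon_i^T}\leq\varepsilon$, which establishes the first claim.

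Next I would substitute the additional hypotheses $\sigma_k=\sigma$ and $c_{ik}=0$ into \Cref{def:effectif_cluster_var}, obtaining $\sigma_{i,\mathrm{suf}}^2(\varepsilon) = \sigma^2\bigpar{\sum_{k\in\OneToN}\psi\bigpar{(1-\frac{b_{ik}}{2\mu\varepsilon})_+}}^{-1}$. The key observation is that, by the very definition of the sufficient cluster, a summand is nonzero exactly when $k\in\mathcal{N}_i^\star(\varepsilon)$, so the sum restricts to this set, on which the positive-part operator may be dropped. Lower-bounding each of the $|\mathcal{N}_i^\star(\varepsilon)|$ remaining terms by their minimum gives $\sum_k\psi(\cdots)\geq |\mathcal{N}_i^\star(\varepsilon)|\,\min_{j\in\mathcal{N}_i^\star(\varepsilon)}\psi(1-\frac{b_{ij}}{2\mu\varepsilon})$, hence an upper bound on $\sigma_{i,\mathrm{suf}}^2(\varepsilon)$. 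Plugging this into the closed form for $T_i^\varepsilon$ then produces the stated inequality.

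I do not anticipate a serious obstacle: the argument is essentially an algebraic inversion of an already-proved rate combined with the definition of the sufficient variance. The only points needing care are (i) justifying that the variance term, rather than $\varepsilon_i^0$, governs the $\max$ in the relevant small-$\varepsilon$ regime, so that the clean expression for $T_i^\varepsilon$ is the operative one, and (ii) the mild notational overload by which $\mathcal{N}_i^\star(\varepsilon)$ denotes the \emph{cardinality} of the sufficient cluster in the final display.
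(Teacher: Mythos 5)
Your proposal is correct and follows essentially the same route as the paper's own proof: take the variance term of the decreasing step-size bound from \Cref{thm:convergence}, invert it to obtain $T_i^\varepsilon = \beta \sigma_{i,\mathrm{suf}}^2(\varepsilon)\mathcal{C}/(2\mu^2\varepsilon)$, then specialise the sufficient variance under $\sigma_k=\sigma$ and $c_{ik}=0$, restricting the sum to the sufficient cluster and lower-bounding it by its cardinality times the minimum summand. If anything, your explicit justification of which branch of the $\max$ governs (point (i)) is slightly more careful than the paper, which simply declares the variance term to be the operative bound.
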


In the case of non-convex objective function, \Cref{asu:PL} is true only if the model is in the neighborhood of a local optimum points, but in general it is not the case. We therefore propose a bound of convergence in the scenario, recovering the classical $O(T^{-1/2})$ rate. 

\begin{theorem}
	\label{thm:convergence_non_cvx}
	Consider \Cref{asu:bounded_variance,asu:smoothness,asu:bounded_heterogeneity}  in the non-convex setting. Let any precision level $\varepsilon>0$, then after running \allforone~for   $T$ iterations in $\N^\star$ s.t. $\varepsilon_i^{T-1}\geq \varepsilon$, taking a constant horizon-dependent step-size $\eta_i^t = \eta_i = \min(\sqrt{2 \varepsilon_{i}^0} (T \smoothness \sigma_{i,{\mathrm{suf}}}^2(\varepsilon))^{-1/2}, (\sigma_{i, \phi}^t/ \sigma_{i, \psi}^t)^2 \beta^{-1})$ we have: 
	\begin{align*}
		\frac{1}{T} \sum_{t=1}^{T} \FullExpec{\sqrdnrm{\nabla R_{i}(\theta^{t-1}_i)}} \leq 2 \sqrt{\frac{2 \varepsilon_{i}^0 \smoothness \sigma_{i,{\mathrm{suf}}}^2(\varepsilon)}{T}}\,.
	\end{align*}  
\end{theorem}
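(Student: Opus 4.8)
The plan is to run the strongly-convex/PL argument of \Cref{thm:convergence} "in reverse": instead of using the PL inequality to turn $\sqrdnrm{\nabla R_i}$ into a contraction on $\varepsilon_i^t$, I keep the gradient norm on the left and telescope it, which is exactly what produces the $O(T^{-1/2})$ rate. The engine is \Cref{lem:descent_lemma}, and the first job is to collapse its right-hand side using the explicit weights of \Cref{def:weights_binary}. Because $\phi(0)=0$, only clients with $r_{ik}^t>0$ carry weight, and for those the definition of the similarity ratio gives $\sqrdnrm{\nabla R_k(\theta_i^{t-1})-\nabla R_i(\theta_i^{t-1})}=(1-r_{ik}^t)\sqrdnrm{\nabla R_i(\theta_i^{t-1})}$, so the first bracket becomes $-\sqrdnrm{\nabla R_i(\theta_i^{t-1})}\sum_{k}\alpha_{ik}^t r_{ik}^t$.

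First I would prove the identity $\sum_k \alpha_{ik}^t r_{ik}^t = 1$. Indeed $\alpha_{ik}^t r_{ik}^t = \psi(r_{ik}^t)(\sigma_{i,\psi}^t)^2/\sigma_k^2$ since $\psi(x)=x\phi(x)$, and summing against the definition $(\sigma_{i,\psi}^t)^{-2}=\sum_k \sigma_k^{-2}\psi(r_{ik}^t)$ collapses the sum to $1$; this turns the descent term into $-\tfrac{\eta_i}{2}\sqrdnrm{\nabla R_i(\theta_i^{t-1})}$. For the $O(\eta^2)$ variance term $\tfrac{\eta_i^2\beta}{2}\sum_k \sigma_k^2(\alpha_{ik}^t)^2$, I would use $\phi(x)\le x$, so that $\phi(r_{ik}^t)^2\le \psi(r_{ik}^t)$ and $\sum_k \sigma_k^2(\alpha_{ik}^t)^2 = (\sigma_{i,\psi}^t)^4\sum_k \sigma_k^{-2}\phi(r_{ik}^t)^2 \le (\sigma_{i,\psi}^t)^2=(\sigma_{i,\mathrm{eff}}^t)^2$. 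The step-size hypothesis $\eta_i\le(\sigma_{i,\phi}^t/\sigma_{i,\psi}^t)^2\beta^{-1}$ is precisely $\eta_i\le(\beta\sum_k\alpha_{ik}^t)^{-1}$ (since $\sum_k\alpha_{ik}^t=(\sigma_{i,\psi}^t/\sigma_{i,\phi}^t)^2$), so \Cref{lem:descent_lemma} applies and yields the clean one-step bound $\Expec{\varepsilon_i^t}{\theta_i^{t-1}}-\varepsilon_i^{t-1}\le -\tfrac{\eta_i}{2}\sqrdnrm{\nabla R_i(\theta_i^{t-1})}+\tfrac{\eta_i^2\beta}{2}(\sigma_{i,\mathrm{eff}}^t)^2$.

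Next I would take total expectation, sum over $t=1,\dots,T$, and telescope the excess-loss gaps; discarding the non-negative $\FullExpec{\varepsilon_i^T}$ leaves $\tfrac{\eta_i}{2}\sum_{t=1}^T\FullExpec{\sqrdnrm{\nabla R_i(\theta_i^{t-1})}}\le \varepsilon_i^0+\tfrac{\eta_i^2\beta}{2}\sum_{t=1}^T\FullExpec{(\sigma_{i,\mathrm{eff}}^t)^2}$. Upper-bounding each effective variance by the time-independent $\sigma_{i,\mathrm{suf}}^2(\varepsilon)$, dividing by $\eta_i T/2$, and plugging the first branch of the min, $\eta_i=\sqrt{2\varepsilon_i^0/(T\beta\sigma_{i,\mathrm{suf}}^2(\varepsilon))}$, makes the two terms $2\varepsilon_i^0/(\eta_i T)$ and $\eta_i\beta\sigma_{i,\mathrm{suf}}^2(\varepsilon)$ coincide, each equal to $\sqrt{2\varepsilon_i^0\beta\sigma_{i,\mathrm{suf}}^2(\varepsilon)/T}$, giving the announced factor $2$. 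The second branch of the min only guarantees feasibility of the descent lemma and is inactive once $T$ is large, by the uniform lower bound on $\sigma_{i,\phi}^t/\sigma_{i,\psi}^t$ recorded after \Cref{thm:convergence}.

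The step I expect to be the real obstacle is the replacement $(\sigma_{i,\mathrm{eff}}^t)^2\le \sigma_{i,\mathrm{suf}}^2(\varepsilon)$. In the strongly-convex/PL case this is \Cref{lem:nested_set_of_collaborating_clients}, whose proof lower-bounds the similarity ratio through $\sqrdnrm{\nabla R_i(\theta_i^{t-1})}\ge 2\mu\varepsilon_i^{t-1}\ge 2\mu\varepsilon$ — an inequality that is no longer available in the purely non-convex regime, where smoothness only gives the opposite bound $\sqrdnrm{\nabla R_i}\le 2\beta\varepsilon_i^t$. This control of the effective variance must therefore be re-argued using only \Cref{asu:bounded_heterogeneity}, for instance by splitting the telescoped sum into iterations where $\sqrdnrm{\nabla R_i}$ is comparable to $\varepsilon$ (where the ratio, hence the variance, is controlled) and those where it is smaller (which contribute negligibly to the average gradient norm). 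Verifying that this split preserves the $O(T^{-1/2})$ rate, and that the standing condition $\varepsilon_i^{T-1}\ge\varepsilon$ supplies enough "large-gradient" iterations, is where the genuine difficulty lies; everything preceding it is the deterministic algebra sketched above.
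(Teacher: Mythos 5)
Your executed steps coincide exactly with the paper's proof in \Cref{app:subsec:proof_cvgce}: the identity $\sum_{k}\alpha_{ik}^t r_{ik}^t=1$ that collapses the descent term, the bound $\sum_k \sigma_k^2(\alpha_{ik}^t)^2\le(\sigma_{i,\psi}^t)^2$ via $\phi^2\le\psi$, the telescoping after discarding $\FullExpec{\varepsilon_i^T}\ge 0$, the replacement of each $(\sigma_{i,\mathrm{eff}}^t)^2$ by $\sigma_{i,\mathrm{suf}}^2(\varepsilon)$, and the balancing choice $\eta_i=\sqrt{2\varepsilon_i^0/(T\smoothness\sigma_{i,\mathrm{suf}}^2(\varepsilon))}$ producing the factor $2$ --- this is precisely how the paper derives \Cref{thm:convergence_non_cvx} from \Cref{eq:before_mu}.

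The step you single out as ``the real obstacle'' is also exactly the step the paper does not justify. The paper writes only that it is ``upper-bounding again with the minimal variance of cluster $\mathcal{N}_i^\star(\varepsilon)$'', i.e., it silently re-invokes \Cref{lem:nested_set_of_collaborating_clients}; but the proof of that lemma goes through the inequality $\sqrdnrm{\nabla R_{i}(\theta^t_i)} \geq 2 \mu \varepsilon_{i}^t \geq 2 \mu \varepsilon$, which is available only under strong convexity (\Cref{asu:strongly_convex}) or the PL condition (\Cref{asu:PL}), neither of which is assumed in \Cref{thm:convergence_non_cvx} (indeed the constant $\mu$ appearing inside $\sigma_{i,\mathrm{suf}}^2(\varepsilon)$ has no intrinsic meaning in the purely non-convex setting). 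Without such an assumption, smoothness gives no lower bound on $\sqrdnrm{\nabla R_i(\theta_i^{t-1})}$ in terms of the excess loss, so the comparison $(\sigma_{i,\mathrm{eff}}^t)^2\le\sigma_{i,\mathrm{suf}}^2(\varepsilon)$ --- equivalently $r_{ik}^t\ge(1-\frac{b_{ik}^2}{2\mu\varepsilon}-c_{ik})_+$ --- cannot be deduced from the standing condition $\varepsilon_i^{T-1}\ge\varepsilon$. Your hesitation is therefore not a defect of your proposal relative to the paper: you have identified a genuine gap in the paper's own argument, and your iteration-splitting idea (which neither you nor the paper carries out) is one plausible route to repairing it. As written, your proposal and the paper's proof are equally complete: both establish the theorem modulo this effective-to-sufficient variance comparison, which in the non-convex regime currently rests on an assumption the theorem does not make.
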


\section{Application with two collaboration criterions}
\label{sec:application}

In this section, we explain how to dynamically estimate in practice the weights $(\alpha^t_{ik})_{i \in \OneToN, k \in \OneToK}$  and propose two methods to instantiate the criterion function $\phi$.

The first proposed schema is a \textit{gradient filter} method based on ``binary'' weights 
(hard-thresholding). This is as opposed to the ``continuous'' collaboration scheme, where $\phi$ is continuous at $0$, i.e. two clients can collaborate while having weights arbitrarily close to $0$ (soft-thresholding).

\begin{definition}[Binary and continuous collaborations]
	For the binary collaboration, we take $\phi_{\mathrm{bin}} : x \mapsto \lambda \mathbb{1}\{ x \geq \lambda \}$. The indicator function is multiplied by a hyperparameter $\lambda$ in order to have for any $x$ in $[0, 1]$, $\phi_{\mathrm{bin}}(x) \leq x$ (required by \Cref{def:weights_binary}). For the continuous collaboration we take $\phi_{\mathrm{cont}} : x \mapsto x$.
\end{definition}

If we assume that all the $(c_{ik})_{k \in \OneToN}$ are equal to zero and that the variance is uniformly bounded across clients, i.e. for all $k$ in $\OneToN$, $\sigma_k = \sigma$, then we can prove that $\sigma_{i, \mathrm{suf}}^2 > \sigma^2 (\psi(1)\mathcal{N}_i^\star(\varepsilon))^{-1}$. Indeed, for both the binary and the continuous variant, for any $x$ in $[0, 1]$, we have $\psi(x) \leq \psi(1)$, it yields $ \sum_{k \in \OneToN} \psi (1- \frac{b_{ik}}{2 \mu \varepsilon}) = \sum_{k \in \mathcal{N}_i^\star(\varepsilon)} \psi (1- \frac{b_{ik}}{2 \mu \varepsilon}) \leq  \psi(1) \mathcal{N}_i^\star(\varepsilon)$ and therefore $\sigma_{i, \mathrm{suf}}^2(\varepsilon) = \sigma^2 (\sum_{k \in \mathcal{N}^\star_i} \psi(1 - \frac{b_{ik}}{2 \mu \varepsilon}))^{-1} \geq \sigma^2 ( \psi(1) \mathcal{N}^\star_i(\varepsilon))^{-1}$. All together, \Cref{cor:sampling_complexity_optimality} recovers the lower-bound $\Omega\big(\ffrac{\beta \sigma^2 }{\mu^2 \varepsilon \mathcal{N}_i^\star(\varepsilon)}\big)$ on $T_i^\varepsilon$ established by \citet{even2022sample}. Moreover, unlike their approach, which relies on a horizon-dependent step-size schedule, our results achieve this bound \emph{without} additional logarithmic factors. 
In addition, our results are more flexible than theirs. Indeed, since our scheme is independent of the final accuracy, it is optimal for any $\varepsilon > 0$. In contrast \citet{even2022sample} prove that for any $\varepsilon > 0$, there exists an algorithm restricted to the sufficient cluster that achieve optimality. 

For the binary variant as we have $\min_{j \in \mathcal{N}i^\star(\varepsilon)} \psi_{\mathrm{bin}}(1 - \frac{b{ik}}{2 \mu \varepsilon}) = \lambda$, the upper-bound on $T_i^\varepsilon$ match the lower-bound on the sample complexity $T_i^\varepsilon$.
However, for the continuous variant $\psi_{\mathrm{cont}}$, the term $\min_{j \in \mathcal{N}i^\star(\varepsilon)} \psi_{\mathrm{cont}}(1 - \frac{b{ik}}{2 \mu \varepsilon})$ can become arbitrarily small when local dissimilarities $b_{ik}$ grow large, potentially degrading the worse-case upper-bound. 

The reason for still considering the continuous version despite its apparent suboptimality is that it minimizes the upper bound in \Cref{lem:descent_lemma}. Indeed, optimizing over $\alpha_{ik}^t$ gives: 
\begin{align*}
	- \sqrdnrm{\nabla R_{i}(\theta^{t-1}_i)} + \sqrdnrm{\nabla R_{k, S}(\theta^{t-1}_i) - \nabla R_{i}(\theta^{t-1}_i)} +  2 \eta_i^t \smoothness \sigma^2_k \alpha_{ik}^{t} = 0 \,,
\end{align*}

and thus, it yields to $\alpha_{ik}^t$ proportional to $\frac{1}{\sigma_k^2} (\sqrdnrm{\nabla R_{i}(\theta^{t-1}_i)} - \sqrdnrm{\nabla R_{k, S}(\theta^{t-1}_i) - \nabla R_{i}(\theta^{t-1}_i)})_+$. We normalize this quantities by $\sqrdnrm{\nabla R_i(\theta_i^{t-1})}$ (independent of $k$, hence preserving the minimization of \Cref{lem:descent_lemma}) to make appear the similarity ratio $r_{ik}$. Moreover, this variant allows to remove the additional hyperparameter $\lambda$, making it more practical.

\paragraph{Estimating the collaboration weights.} 

Strictly speaking, estimating $\alpha_{ik}^t$ from \Cref{def:weights_binary} is not possible without knowing the true gradients. We propose a stochastic estimation by computing for any iteration $t$ in $\N^\star$ and any client $i$ in $\OneToN$, two random variables $Z_{ik}^t$ and $Z_{i}^t$ defined as follows:
\begin{align}
	\label{eq:estimating_alpha}
	Z_{ik}^t = \Big\|\frac{1}{m} \sum_{j=1}^m \left(g^j_i(\theta_i) - g^j_k(\theta_i)\right)\Big\|^2 \qquad \text{and} \qquad Z_i^t = \Big\|\frac{1}{m}\sum_{j=1}^m g^j_i(\theta_i)\Big\|^2\,,
\end{align} 
where $(g^j_i(\theta_i), g^j_k(\theta_i))_{j\in[b_{\alpha}]}$ are $b_{\alpha}$ in $\N^\star$ stochastic gradients independent of $g_i(\theta_i), g_k(\theta_i)$ computed on $b_{\alpha}$ different batches. Then we replace $r_{ik}^t$ by $\hat{r}_{ik}^{t} = ( 1 - Z_{ik}^t / Z_i)_ +$. We experimentally tested several approaches and this one gave the best results on our datasets. Furthermore, to lower computational overhead, one may also consider updating the weights less frequently -- e.g., per epoch or on a logarithmic schedule -- as their evolution is expected to be gradual. Although one might argue that allocating the additional $b_{\alpha}$ gradients to reduce local variance could be preferable, when $N \gg b_{\alpha}$ leveraging inter-client collaboration yields better variance reduction. While improving the proposed above estimator along with strong theoretical guarantee is a promising direction -- e.g., by drawing inspiration from adaptive methods such as Adam \citep{kingma2014adam} –- it is left for future work.

\section{Experiments}
\label{sec:experiments}

\begin{wrapfigure}[11]{L}{0.55\linewidth}
	\centering
	\vspace{-0.5cm}
	\begin{subfigure}{0.48\linewidth}
		\includegraphics[width=\linewidth]{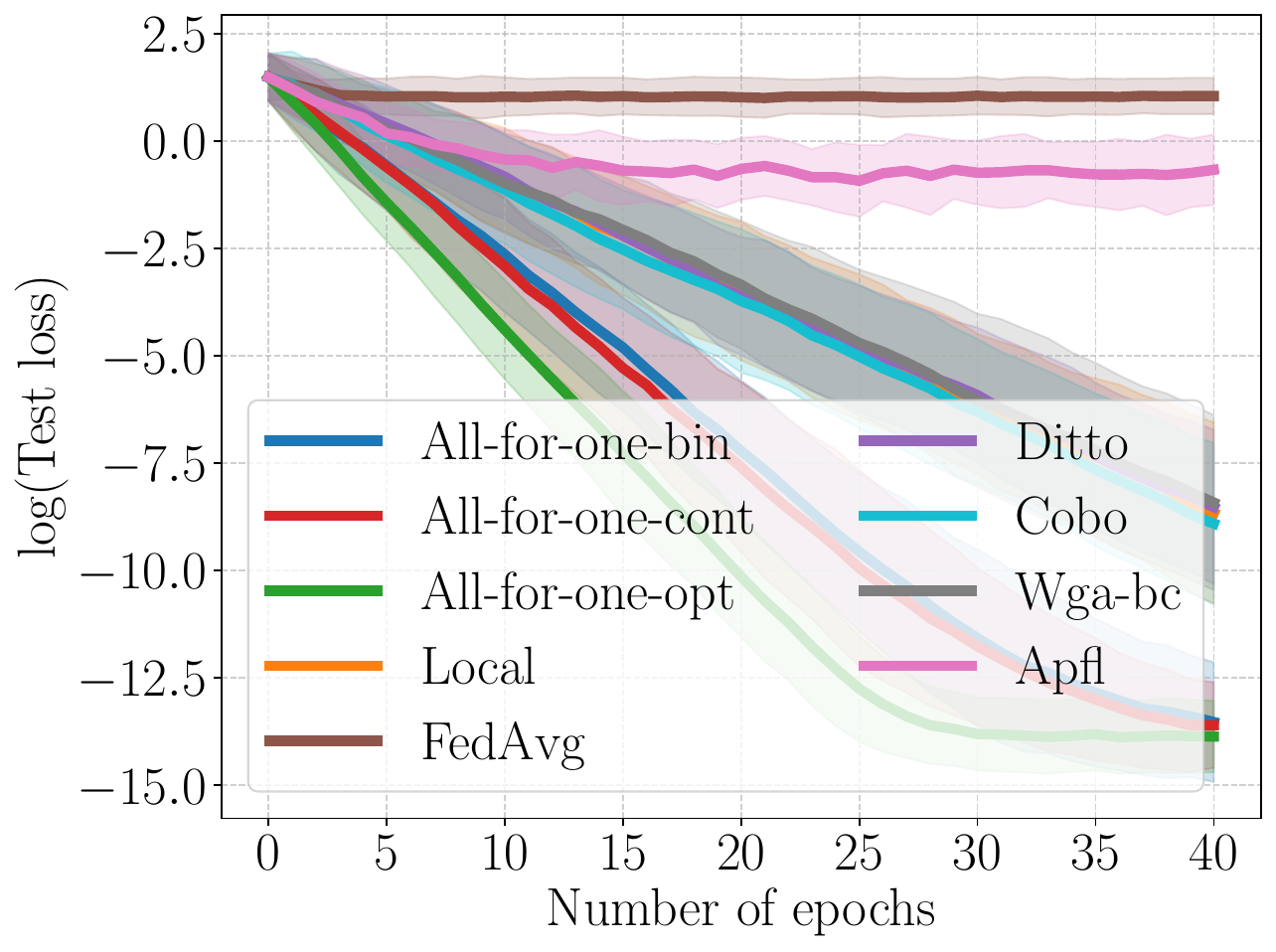}
		\caption{\label{fig:synth_d2} Synth. data with $d=2$.}
	\end{subfigure}
	\hfill
	\begin{subfigure}{0.48\linewidth}
		\includegraphics[width=\linewidth]{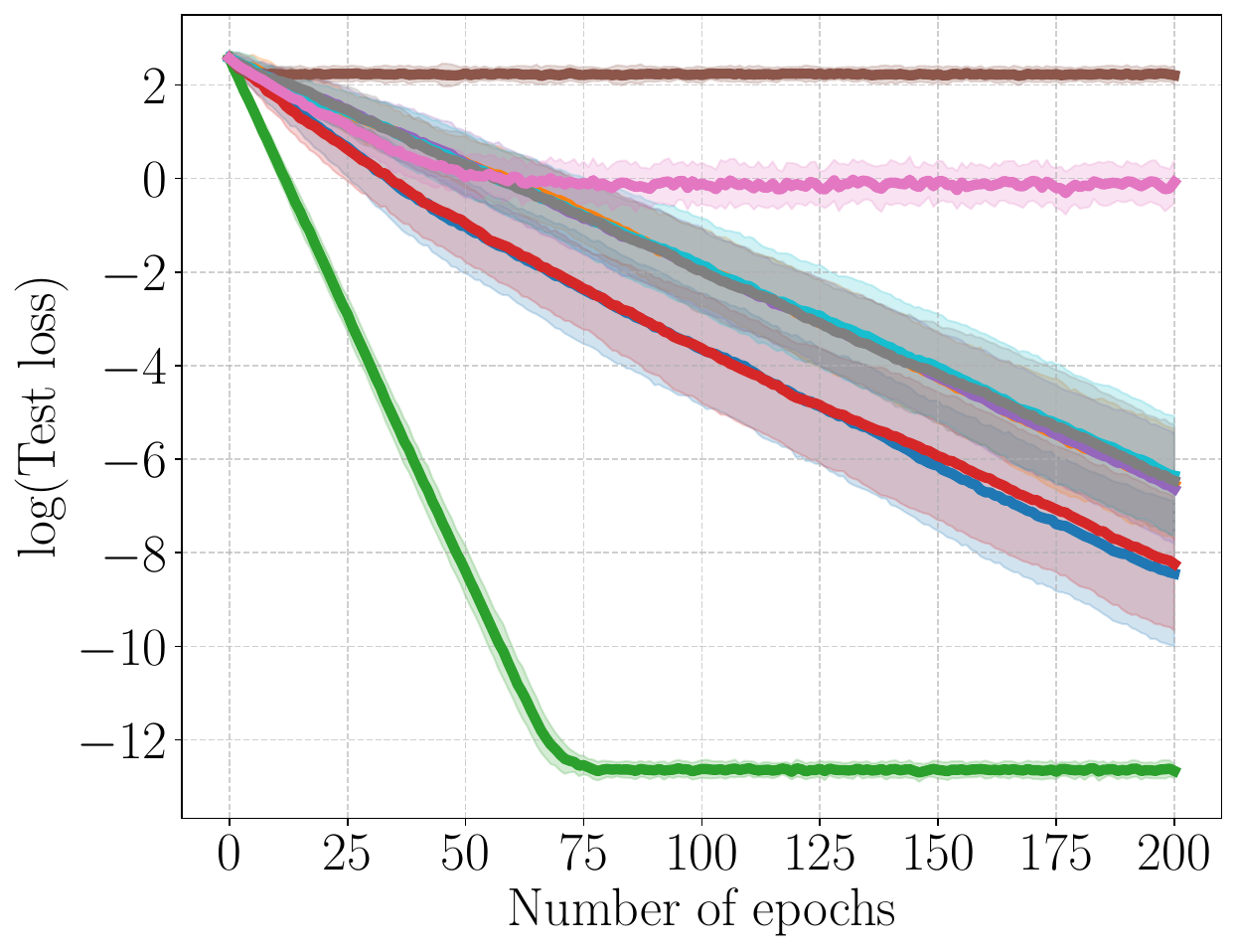}  
		\caption{\label{fig:synth_d10} Synth. data with $d=10$.}
	\end{subfigure}
	\caption{\label{fig:synth_data} Test loss for synthetic dataset.}
\end{wrapfigure}

We now evaluate the empirical performance of the collaboration scheme from~\Cref{sec:application}. Our code is provided on \href{https://github.com/philipco/adaptative_collaboration}{Github}. Complete experiments' settings are detailed in \Cref{app:sec:experiments}. All experiments are performed without any tuning of the algorithms (e.g. same set of hyperparameters for every algorithms), for three different seeds, without any pretraining. We use five datasets: one synthetic that allows to illustrate our theorems in a simple framework with two clusters (described in \Cref{app:sec:experiments}), two real with synthetic splits (Mnist with a CNN and Cifar10 with LeNet), and two real FL datasets (Heart Disease with a linear model and Ixi with Unet) put forward by \citet{terrail2022flamby}. To split Mnist and Cifar10 accross the $N=20$ client and to model statistical heterogeneity, we adopt the cluster partitioning scheme introduced in \Cref{app:def:cluster_split}.  The details setting of training for each dataset is given in \Cref{app:tab:settings_xp}.

We compare \allforone~(using the binary and continuous form) to \FedAvg, to \texttt{Local} training without any collaboration. For synthetic datasets/splits with known ground-truth clusters, we compare against the optimal \allforone~algorithm, in which each client collaborates exclusively with clients from the same cluster using uniform weights. This is the closest algorithm to the one proposed by \citep{even2022sample}, which weights are not computable in practice.  In addition, we compare our algorithm to the following related algorithms \texttt{Ditto} \citep{li2021ditto}, \texttt{Cobo} \citep{hashemi2024cobo}, \texttt{Wga-bc} \citep{chayti2021linear}, and \texttt{Apfl} \citep{deng2020adaptive}.
Note that to speed-up the experiments and reduce the computational cost, we update the weights after a few batch of iterations and not after each one (see \Cref{app:tab:settings_xp}). On \Cref{fig:synth_data,fig:real_dataset}, we plot the  average of test loss (in log scale) over the $N$ clients and the three seeds w.r.t. the number of epoch. The corresponding standard deviation is indicated with the shadow area. We report the final train and test loss/accuracy (averaged over clients/seeds and weighted by their relative dataset sizes) for real dataset on \Cref{tab:exp_real}, we highlight in green (resp. orange) for each dataset the algorithm that reach the best value on test (resp. train) set.

\textbf{Observations.}
\begin{itemize}[leftmargin=*, itemsep=0pt, parsep=0pt, partopsep=0pt]
	\item On synthetic data (\Cref{fig:synth_data}), due to the existence of two distinct clusters, \FedAvg~exhibits immediate saturation caused by global aggregation across heterogeneous clients. In contrast, all other methods display a sub-linear convergence arising from the fact that, in our least-squares regression setting, noise comes only from the sampling of the features and not from additive perturbations to the outputs. This setup is deliberately chosen to highlight this sublinear convergence. While \texttt{Ditto}, \texttt{Cobo}, \texttt{Wga-bc} and \texttt{Apfl} have the same convergence rate than \texttt{Local} training, \allforone~achieves an accelerated convergence rate by effectively leveraging collaboration with clients from the same cluster and therefore acts as a variance-reduction method.
	\item On \Cref{fig:synth_d10}, we observe two distinct convergence regimes for \allforone: an initial linear phase followed by a sub-linear one. In the early phase, \allforone~collaborates with all client from the same cluster enjoying a faster convergence, then dynamically shifts to a local training. This sustains progress, converging at a rate matching \texttt{Local} while avoiding \FedAvg's saturation and still retaining early-phase's advantage over \texttt{Local}.
	\item Although the optimal \allforone~algorithm presents the best convergence across the four scenarios, it is unimplementable in practice as it requires to know the true underlying clusters \emph{before} the training. It should be considered as a theoretical lower bound for the true \allforone, highlighting opportunities for future advancements in the evaluation of the weights $(\alpha_{ik})_{i,k \in \OneToN}$. Observe that on Cifar10, the true \allforone~almost reach the same performance, showcasing that it effectively identifies the best collaboration.
	\item \FedAvg~consistently underperforms across all datasets, yet having close train and test performances. 
	\item While \texttt{Local} training achieves high training accuracy, it often fails to generalize, with notable gaps between train and test performance. The exception is the Ixi dataset, where train and test scores are nearly identical. This suggests that the data distribution in Ixi is highly heterogeneous but also internally consistent within each client, making local overfitting less detrimental.
	\item On Cifar10 the binary \allforone~achieves the best generalization performance ($63.9\%$), significantly outperforming both \texttt{Local} (58.5\%) and \FedAvg~(28.9\%) in test accuracy. \texttt{Local}, \texttt{Ditto}, \texttt{Cobo} and \texttt{Apfl} overfit after $80$ epochs. The absolute performance remains below commonly reported benchmarks due to the use of a compact LeNet model, chosen for computational feasibility. This highlights that even under limited capacity, adaptive decentralized collaboration can yield substantial gains.
\end{itemize}
Overall, the $\allforone$ presents the best test performance with the smaller gap with the train performance, illustrating both its efficiency and its stability.

\begin{figure}
	\centering
	\centering     
	\begin{subfigure}{0.24\linewidth}
		\includegraphics[width=\linewidth]{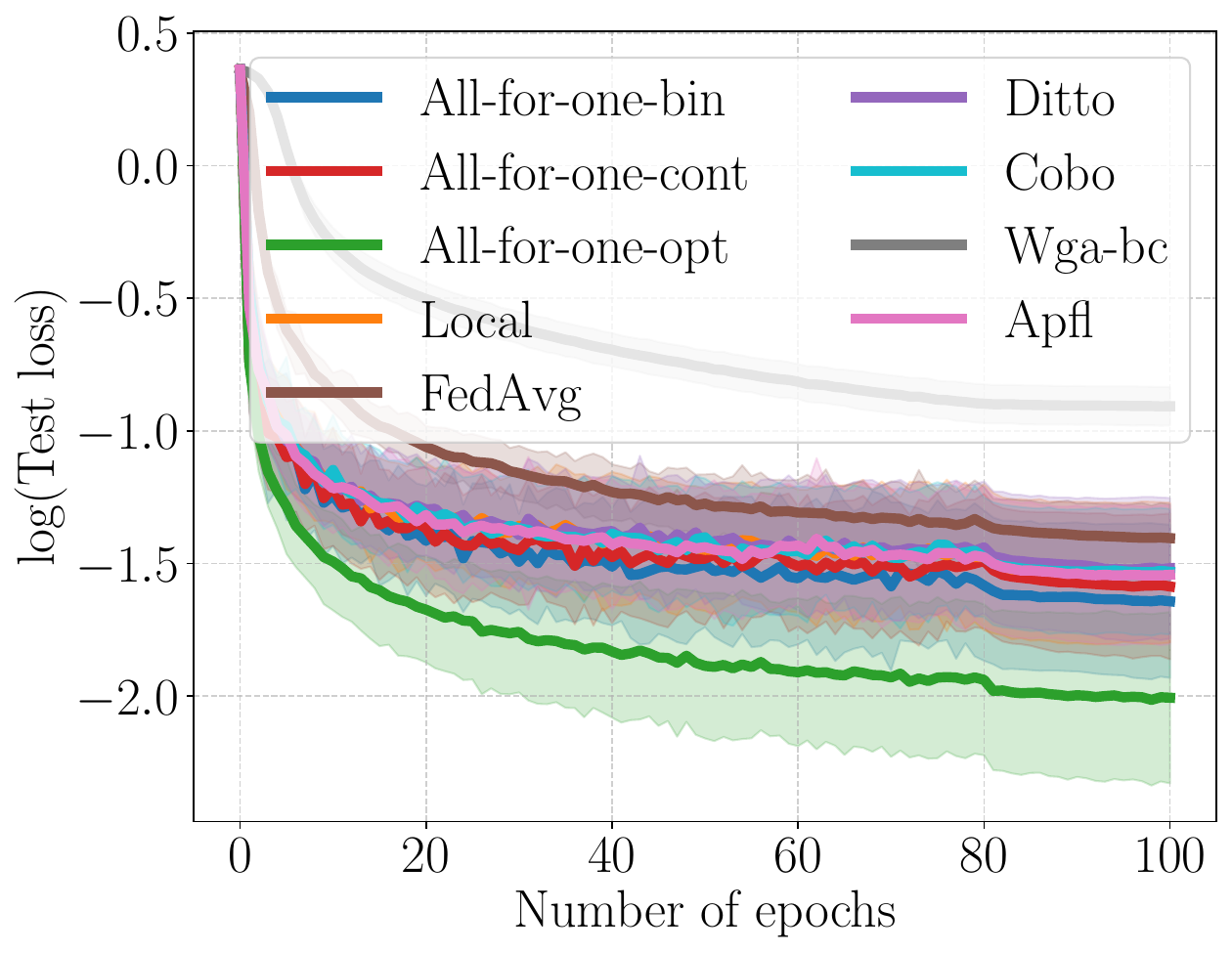}
		\caption{\label{fig:mnist} Mnist.} 
	\end{subfigure}
	\begin{subfigure}{0.24\linewidth}
		\includegraphics[width=\linewidth]{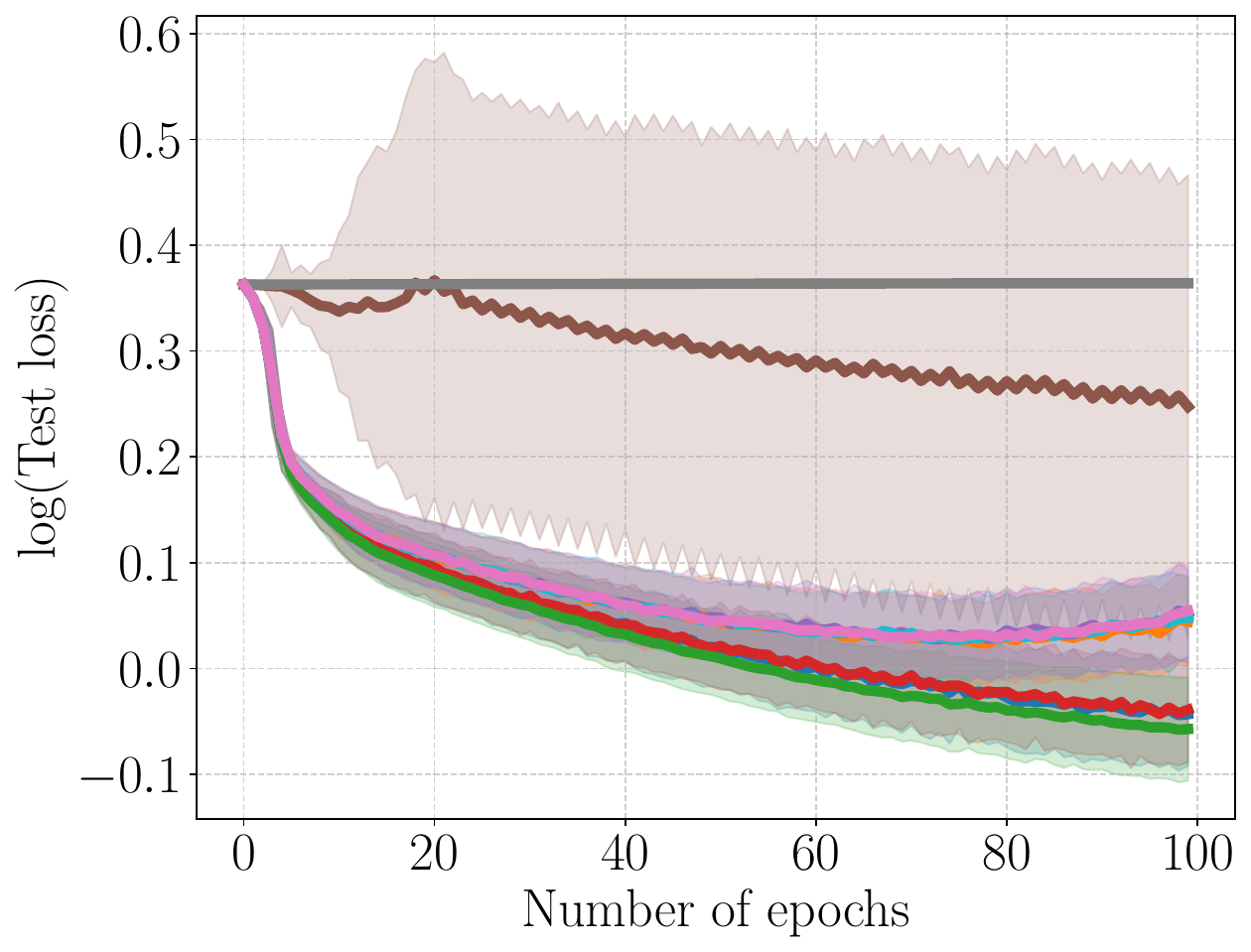}
		\caption{\label{fig:cifar10} Cifar10.}
	\end{subfigure}
	\begin{subfigure}{0.24\linewidth}
		\includegraphics[width=\linewidth]{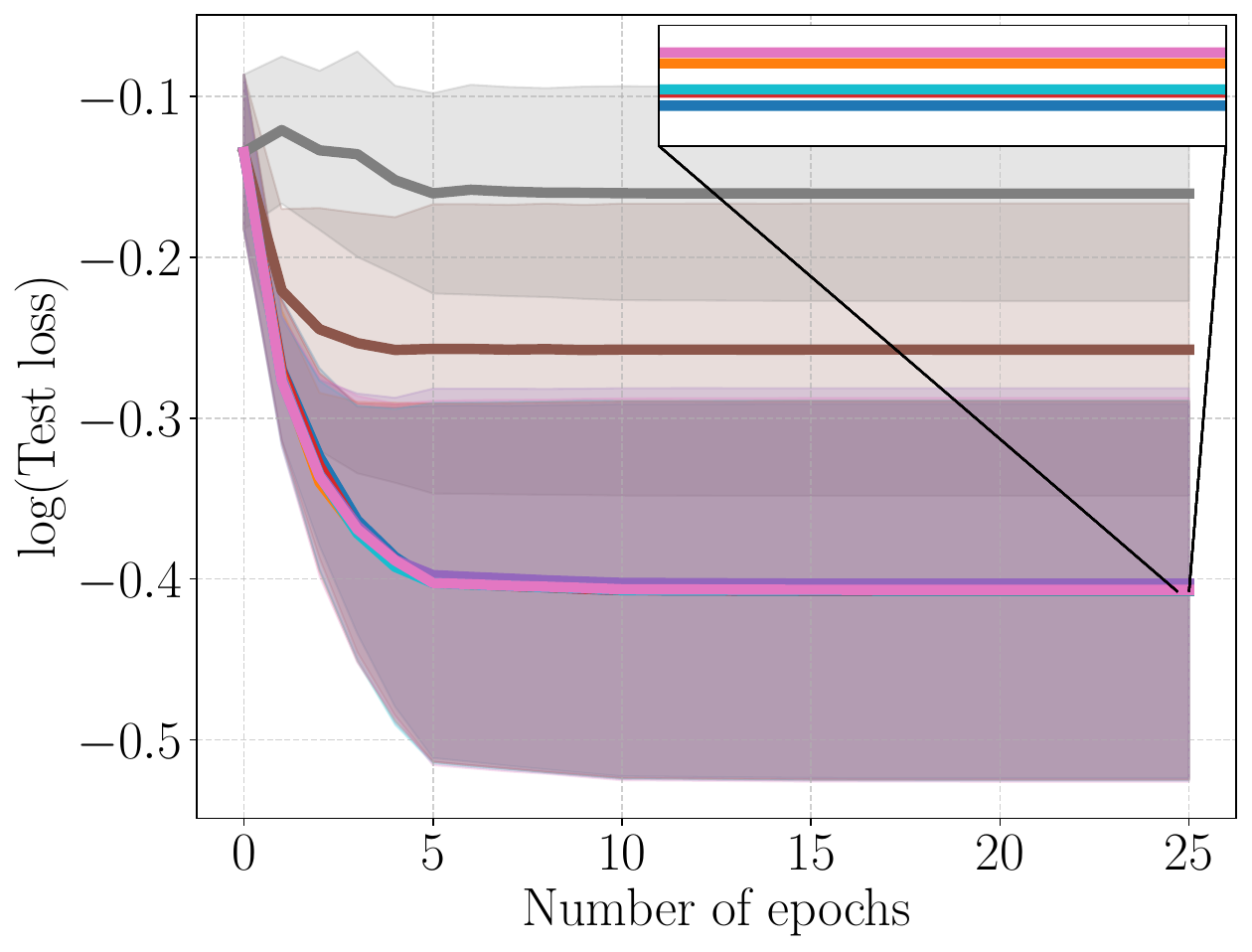}
		\caption{ \label{fig:heart_disease} Heart disease}
	\end{subfigure}
	\begin{subfigure}{0.24\linewidth}
		\includegraphics[width=\linewidth]{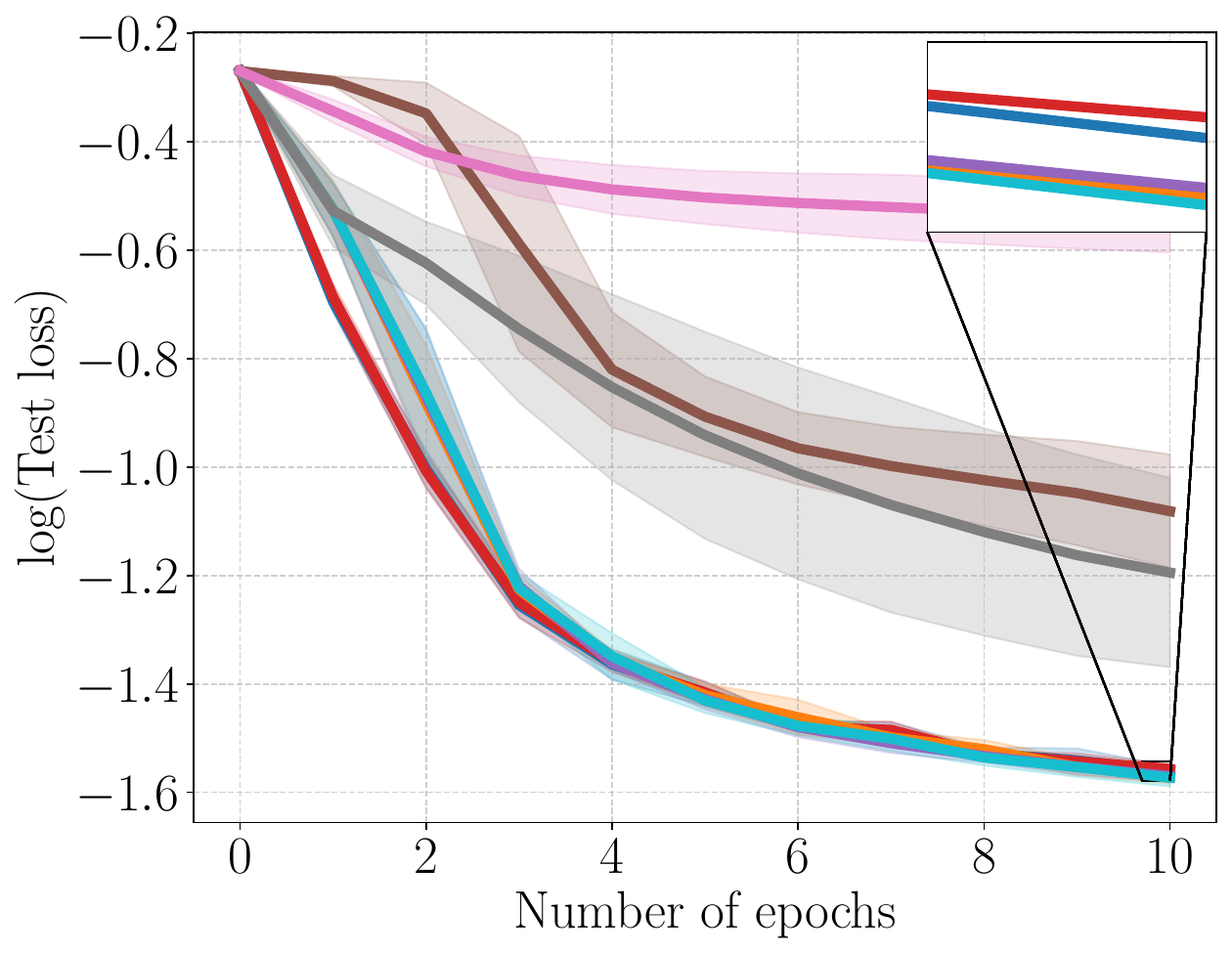}
		\caption{ \label{fig:ixi} Ixi}
	\end{subfigure}
	\caption{\label{fig:real_dataset} Test loss for real dataset. X-axis: number of epochs. Y-axis: logarithm of the test loss.}
\end{figure}


\begin{table}
	\centering
	\caption{Train and test accuracy/loss for the real datasets.}
	\label{tab:exp_real}
	\resizebox{\textwidth}{!}{%
		\begin{tabular}{p{0.09\textwidth}p{0.17\textwidth}cccccccc}
			\toprule
			& Algorithm & \multicolumn{2}{c}{Mnist} & \multicolumn{2}{c}{Cifar10} & \multicolumn{2}{c}{ Heart Disease} & \multicolumn{2}{c}{Ixi} \\
			& & Train & Test & Train & Test & Train & Test & Train & Test \\
			\midrule
			Accuracy & All-for-one-bin & \cellcolor{orange!25}100.0 & \cellcolor{green!25}99.2 & 69.9 & \cellcolor{green!25} 64.2 & \cellcolor{orange!25}88.9 & \cellcolor{green!25}82.3 & 97.3 & 97.2 \\ 
			(in $\%$) & All-for-one-cont & \cellcolor{orange!25}100.0 & 99.1 & 70.7 & 63.5 & \cellcolor{orange!25}88.9 & 82.1 & 97.3 & 97.2 \\ 
			&  Local & \cellcolor{orange!25}100.0 & 99.1 & 75.2 & 59.0 & 77.8 & 82.1 & 97.3 & \cellcolor{green!25}97.3 \\ 
			&  FedAvg & 98.9 & 98.7 & 30.1 & 30.8 & 62.2 & 75.2 & 91.5 & 91.5 \\ 
			&  Ditto & \cellcolor{orange!25}100.0 & 99.1 & \cellcolor{orange!25}76.5 & 58.5 & 77.8 & 81.9 & 97.3 & \cellcolor{green!25}97.3 \\ 
			&  Cobo & \cellcolor{orange!25}100.0 & 99.1 & 76.2 & 58.7 & \cellcolor{orange!25}88.9 & 81.9 & \cellcolor{orange!25}97.4 & \cellcolor{green!25}97.3 \\ 
			&  Wga-bc & 96.3 & 96.3 & 4.8 & 4.7 & 55.6 & 66.3 & 93.2 & 93.1 \\ 
			&  Apfl & \cellcolor{orange!25}100.0 & 99.1 & 75.8 & 58.7 & 77.8 & 82.1 & 70.4 & 70.4 \\ 
			Loss & All-for-one-bin & -4.01 & \cellcolor{green!25} -1.64 & -0.11 & \cellcolor{green!25}-0.04 & -0.67 & \cellcolor{green!25}-0.41 & -1.57 & -1.56 \\ 
			(in log) & All-for-one-cont & -3.99 & -1.59 & -0.12 & \cellcolor{green!25}-0.04 & \cellcolor{orange!25}-0.68 & \cellcolor{green!25}-0.41 & -1.57 & -1.56 \\ 
			&  Local & \cellcolor{orange!25}-4.10 & -1.53 & -0.21 & 0.05 & -0.52 & \cellcolor{green!25}-0.41 & \cellcolor{orange!25}-1.58 & \cellcolor{green!25}-1.57 \\ 
			&  FedAvg & -1.61 & -1.40 & 0.26 & 0.26 & -0.22 & -0.26 & -1.08 & -1.08 \\ 
			&  Ditto & -3.86 & -1.52 & \cellcolor{orange!25}-0.22 & 0.06 & -0.41 & -0.40 & \cellcolor{orange!25}-1.58 & \cellcolor{green!25}-1.57 \\ 
			&  Cobo & -4.01 & -1.53 & -0.21 & 0.05 & -0.60 & \cellcolor{green!25}-0.41 & \cellcolor{orange!25}-1.58 & \cellcolor{green!25}-1.57 \\ 
			&  Wga-bc & -0.94 & -0.91 & 0.36 & 0.36 & -0.12 & -0.16 & -1.20 & -1.19 \\ 
			&  Apfl & -3.94 & -1.54 & \cellcolor{orange!25}-0.22 & 0.05 & -0.43 & \cellcolor{green!25}-0.41 & -0.53 & -0.54 \\ 
			\bottomrule
		\end{tabular}
	}
\end{table}

\section{Conclusion}
\label{sec:ccl}

In this work, we establish a relationship between clients statistical heterogeneity, gradients at each iteration and collaboration in distributed personalized learning. This results in a new accuracy-driven collaboration criterion that adapts dynamically without requiring assumption on the existence of underlying fixed cluster. We provide convergence guarantees over strongly-convex functions, non-convex functions, and functions satisfying the PL condition, and recover the optimal sample complexity.

\textbf{Broader impact.}
This work open the door for better collaboration and specialization of model, which can be very beneficial for different tasks,  LLM for instance \citep{liu2023dynamic}.
Furthermore, by quantifying each client’s utility to the training, our framework opens new direction to incentive mechanisms and rewarding clients for their contribution \citep{karimireddy2022mechanisms}. Additionally, the adaptive selection of collaborators enhances robustness to adversarial or misaligned clients, contributing to fairness and reliability in decentralized learning systems \citep{yu2020salvaging,li2021ditto}.

\textbf{Limitations and open directions.}
First, while our theory establish the relationship between true gradients and collaboration, in practice, we rely on a stochastic oracle; improving the gradient estimator is a valuable extension. Second, we assume uniform bounds on gradient noise -- a standard but potentially restrictive assumption. Third, we do not account for partial device participation, i.e., scenarios where only a random subset of clients is active at each round.

\section{Acknowledgment}

This work was supported by the French government managed by the Agence Nationale de la Recherche (ANR) through France 2030 program with the reference ANR-23-PEIA-005 (REDEEM project). It was also funded in part by the Groupe La Poste, sponsor of the Inria Foundation, in the framework of the FedMalin Inria Challenge. Laurent Massoulié was supported by the French government under management of Agence Nationale de la Recherche as part of the “Investissements d’avenir” program, reference ANR19- P3IA-0001 (PRAIRIE 3IA Institute).

The authors are grateful to the CLEPS infrastructure from the Inria of Paris for providing resources and support.

\bibliographystyle{abbrvnat} 
\bibliography{main.bib}

\begin{thebibliography}{37}
\providecommand{\natexlab}[1]{#1}
\providecommand{\url}[1]{\texttt{#1}}
\expandafter\ifx\csname urlstyle\endcsname\relax
  \providecommand{\doi}[1]{doi: #1}\else
  \providecommand{\doi}{doi: \begingroup \urlstyle{rm}\Url}\fi

\bibitem[Arivazhagan et~al.(2019)Arivazhagan, Aggarwal, Singh, and
  Choudhary]{arivazhagan2019federated}
M.~G. Arivazhagan, V.~Aggarwal, A.~K. Singh, and S.~Choudhary.
\newblock Federated learning with personalization layers.
\newblock \emph{arXiv preprint arXiv:1912.00818}, 2019.

\bibitem[Beaussart et~al.(2021)Beaussart, Grimberg, Hartley, and
  Jaggi]{beaussart2021waffle}
M.~Beaussart, F.~Grimberg, M.-A. Hartley, and M.~Jaggi.
\newblock Waffle: Weighted averaging for personalized federated learning.
\newblock \emph{arXiv preprint arXiv:2110.06978}, 2021.

\bibitem[Bottou et~al.(2018)Bottou, Curtis, and
  Nocedal]{bottou2018optimization}
L.~Bottou, F.~E. Curtis, and J.~Nocedal.
\newblock Optimization methods for large-scale machine learning.
\newblock \emph{SIAM review}, 60\penalty0 (2):\penalty0 223--311, 2018.

\bibitem[Capitaine et~al.(2024)Capitaine, Boursier, Scheid, Moulines, Jordan,
  El-Mhamdi, and Durmus]{capitaine2024unravelling}
A.~Capitaine, E.~Boursier, A.~Scheid, E.~Moulines, M.~Jordan, E.-M. El-Mhamdi,
  and A.~Durmus.
\newblock Unravelling in collaborative learning.
\newblock \emph{Advances in Neural Information Processing Systems},
  37:\penalty0 97231--97260, 2024.

\bibitem[Chayti et~al.(2021)Chayti, Karimireddy, Stich, Flammarion, and
  Jaggi]{chayti2021linear}
E.~M. Chayti, S.~P. Karimireddy, S.~U. Stich, N.~Flammarion, and M.~Jaggi.
\newblock Linear speedup in personalized collaborative learning.
\newblock \emph{arXiv preprint arXiv:2111.05968}, 2021.

\bibitem[Collins et~al.(2021)Collins, Hassani, Mokhtari, and
  Shakkottai]{collins2021exploiting}
L.~Collins, H.~Hassani, A.~Mokhtari, and S.~Shakkottai.
\newblock Exploiting shared representations for personalized federated
  learning.
\newblock In M.~Meila and T.~Zhang, editors, \emph{Proceedings of the 38th
  International Conference on Machine Learning}, volume 139 of
  \emph{Proceedings of Machine Learning Research}, pages 2089--2099. PMLR,
  18--24 Jul 2021.
\newblock URL \url{https://proceedings.mlr.press/v139/collins21a.html}.

\bibitem[Deng et~al.(2020)Deng, Kamani, and Mahdavi]{deng2020adaptive}
Y.~Deng, M.~M. Kamani, and M.~Mahdavi.
\newblock Adaptive personalized federated learning.
\newblock \emph{arXiv preprint arXiv:2003.13461}, 2020.

\bibitem[Even et~al.(2022)Even, Massouli{\'e}, and Scaman]{even2022sample}
M.~Even, L.~Massouli{\'e}, and K.~Scaman.
\newblock On sample optimality in personalized collaborative and federated
  learning.
\newblock \emph{Advances in Neural Information Processing Systems},
  35:\penalty0 212--225, 2022.

\bibitem[Fallah et~al.(2020)Fallah, Mokhtari, and
  Ozdaglar]{fallah2020personalized}
A.~Fallah, A.~Mokhtari, and A.~Ozdaglar.
\newblock Personalized federated learning with theoretical guarantees: A
  model-agnostic meta-learning approach.
\newblock \emph{Advances in neural information processing systems},
  33:\penalty0 3557--3568, 2020.

\bibitem[Ghosh et~al.(2020)Ghosh, Chung, Yin, and
  Ramchandran]{ghosh2020efficient}
A.~Ghosh, J.~Chung, D.~Yin, and K.~Ramchandran.
\newblock An efficient framework for clustered federated learning.
\newblock \emph{Advances in neural information processing systems},
  33:\penalty0 19586--19597, 2020.

\bibitem[Hanzely et~al.(2020)Hanzely, Hanzely, Horv{\'a}th, and
  Richt{\'a}rik]{hanzely2020lower}
F.~Hanzely, S.~Hanzely, S.~Horv{\'a}th, and P.~Richt{\'a}rik.
\newblock Lower bounds and optimal algorithms for personalized federated
  learning.
\newblock \emph{Advances in Neural Information Processing Systems},
  33:\penalty0 2304--2315, 2020.

\bibitem[Hashemi et~al.(2024)Hashemi, He, and Jaggi]{hashemi2024cobo}
D.~Hashemi, L.~He, and M.~Jaggi.
\newblock Cobo: Collaborative learning via bilevel optimization.
\newblock \emph{arXiv preprint arXiv:2409.05539}, 2024.

\bibitem[Karimireddy et~al.(2020)Karimireddy, Kale, Mohri, Reddi, Stich, and
  Suresh]{karimireddy2020scaffold}
S.~P. Karimireddy, S.~Kale, M.~Mohri, S.~Reddi, S.~Stich, and A.~T. Suresh.
\newblock Scaffold: Stochastic controlled averaging for federated learning.
\newblock In \emph{International conference on machine learning}, pages
  5132--5143. PMLR, 2020.

\bibitem[Karimireddy et~al.(2022)Karimireddy, Guo, and
  Jordan]{karimireddy2022mechanisms}
S.~P. Karimireddy, W.~Guo, and M.~I. Jordan.
\newblock Mechanisms that incentivize data sharing in federated learning.
\newblock \emph{arXiv preprint arXiv:2207.04557}, 2022.

\bibitem[Kingma and Ba(2014)]{kingma2014adam}
D.~P. Kingma and J.~Ba.
\newblock Adam: A method for stochastic optimization.
\newblock \emph{arXiv preprint arXiv:1412.6980}, 2014.

\bibitem[Konečný et~al.(2016)Konečný, McMahan, Yu, Richtarik, Suresh, and
  Bacon]{konecny_federated_2016}
J.~Konečný, H.~B. McMahan, F.~X. Yu, P.~Richtarik, A.~T. Suresh, and
  D.~Bacon.
\newblock Federated {Learning}: {Strategies} for {Improving} {Communication}
  {Efficiency}.
\newblock In \emph{{NIPS} {Workshop} on {Private} {Multi}-{Party} {Machine}
  {Learning}}, 2016.

\bibitem[Krizhevsky et~al.(2009)Krizhevsky, Hinton,
  et~al.]{krizhevsky_learning_2009}
A.~Krizhevsky, G.~Hinton, et~al.
\newblock Learning multiple layers of features from tiny images.
\newblock 2009.

\bibitem[Kulkarni et~al.(2020)Kulkarni, Kulkarni, and Pant]{kulkarni2020survey}
V.~Kulkarni, M.~Kulkarni, and A.~Pant.
\newblock Survey of personalization techniques for federated learning.
\newblock In \emph{2020 fourth world conference on smart trends in systems,
  security and sustainability (WorldS4)}, pages 794--797. IEEE, 2020.

\bibitem[Lecun et~al.(1998)Lecun, Bottou, Bengio, and
  Haffner]{lecun_gradient-based_1998}
Y.~Lecun, L.~Bottou, Y.~Bengio, and P.~Haffner.
\newblock Gradient-based learning applied to document recognition.
\newblock \emph{Proceedings of the IEEE}, 86\penalty0 (11):\penalty0
  2278--2324, Nov. 1998.
\newblock ISSN 1558-2256.
\newblock \doi{10.1109/5.726791}.
\newblock Conference Name: Proceedings of the IEEE.

\bibitem[Li et~al.(2020)Li, Sahu, Zaheer, Sanjabi, Talwalkar, and
  Smith]{li2020federated}
T.~Li, A.~K. Sahu, M.~Zaheer, M.~Sanjabi, A.~Talwalkar, and V.~Smith.
\newblock Federated optimization in heterogeneous networks.
\newblock \emph{Proceedings of Machine learning and systems}, 2:\penalty0
  429--450, 2020.

\bibitem[Li et~al.(2021)Li, Hu, Beirami, and Smith]{li2021ditto}
T.~Li, S.~Hu, A.~Beirami, and V.~Smith.
\newblock Ditto: Fair and robust federated learning through personalization.
\newblock In \emph{International conference on machine learning}, pages
  6357--6368. PMLR, 2021.

\bibitem[Li et~al.(2019)Li, Huang, Yang, Wang, and Zhang]{li2019convergence}
X.~Li, K.~Huang, W.~Yang, S.~Wang, and Z.~Zhang.
\newblock On the convergence of fedavg on non-iid data.
\newblock \emph{arXiv preprint arXiv:1907.02189}, 2019.

\bibitem[Liang et~al.(2020)Liang, Liu, Ziyin, Allen, Auerbach, Brent,
  Salakhutdinov, and Morency]{liang2020think}
P.~P. Liang, T.~Liu, L.~Ziyin, N.~B. Allen, R.~P. Auerbach, D.~Brent,
  R.~Salakhutdinov, and L.-P. Morency.
\newblock Think locally, act globally: Federated learning with local and global
  representations.
\newblock \emph{arXiv preprint arXiv:2001.01523}, 2020.

\bibitem[Liu et~al.(2023{\natexlab{a}})Liu, Wu, Chen, Hu, Zhou, and
  Wu]{liu2023feddwa}
J.~Liu, J.~Wu, J.~Chen, M.~Hu, Y.~Zhou, and D.~Wu.
\newblock Feddwa: Personalized federated learning with dynamic weight
  adjustment.
\newblock \emph{arXiv preprint arXiv:2305.06124}, 2023{\natexlab{a}}.

\bibitem[Liu et~al.(2023{\natexlab{b}})Liu, Zhang, Li, Liu, and
  Yang]{liu2023dynamic}
Z.~Liu, Y.~Zhang, P.~Li, Y.~Liu, and D.~Yang.
\newblock Dynamic llm-agent network: An llm-agent collaboration framework with
  agent team optimization.
\newblock \emph{arXiv preprint arXiv:2310.02170}, 2023{\natexlab{b}}.

\bibitem[Mansour et~al.(2020)Mansour, Mohri, Ro, and Suresh]{mansour2020three}
Y.~Mansour, M.~Mohri, J.~Ro, and A.~T. Suresh.
\newblock Three approaches for personalization with applications to federated
  learning.
\newblock \emph{arXiv preprint arXiv:2002.10619}, 2020.

\bibitem[Marfoq et~al.(2021)Marfoq, Neglia, Bellet, Kameni, and
  Vidal]{marfoq2021federated}
O.~Marfoq, G.~Neglia, A.~Bellet, L.~Kameni, and R.~Vidal.
\newblock Federated multi-task learning under a mixture of distributions.
\newblock \emph{Advances in Neural Information Processing Systems},
  34:\penalty0 15434--15447, 2021.

\bibitem[McMahan et~al.(2017)McMahan, Moore, Ramage, Hampson, and
  Arcas]{mcmahan_communication-efficient_2017}
B.~McMahan, E.~Moore, D.~Ramage, S.~Hampson, and B.~A.~y. Arcas.
\newblock Communication-{Efficient} {Learning} of {Deep} {Networks} from
  {Decentralized} {Data}.
\newblock In \emph{Artificial {Intelligence} and {Statistics}}, pages
  1273--1282. PMLR, Apr. 2017.
\newblock ISSN: 2640-3498.

\bibitem[Muhammad et~al.(2020)Muhammad, Wang, O'Reilly-Morgan, Tragos, Smyth,
  Hurley, Geraci, and Lawlor]{muhammad2020fedfast}
K.~Muhammad, Q.~Wang, D.~O'Reilly-Morgan, E.~Tragos, B.~Smyth, N.~Hurley,
  J.~Geraci, and A.~Lawlor.
\newblock Fedfast: Going beyond average for faster training of federated
  recommender systems.
\newblock In \emph{Proceedings of the 26th ACM SIGKDD international conference
  on knowledge discovery \& data mining}, pages 1234--1242, 2020.

\bibitem[Ogier~du Terrail et~al.(2022)Ogier~du Terrail, Ayed, Cyffers,
  Grimberg, He, Loeb, Mangold, Marchand, Marfoq, Mushtaq, Muzellec,
  Philippenko, Silva, Tele\'{n}czuk, Albarqouni, Avestimehr, Bellet,
  Dieuleveut, Jaggi, Karimireddy, Lorenzi, Neglia, Tommasi, and
  Andreux]{terrail2022flamby}
J.~Ogier~du Terrail, S.-S. Ayed, E.~Cyffers, F.~Grimberg, C.~He, R.~Loeb,
  P.~Mangold, T.~Marchand, O.~Marfoq, E.~Mushtaq, B.~Muzellec, C.~Philippenko,
  S.~Silva, M.~Tele\'{n}czuk, S.~Albarqouni, S.~Avestimehr, A.~Bellet,
  A.~Dieuleveut, M.~Jaggi, S.~P. Karimireddy, M.~Lorenzi, G.~Neglia,
  M.~Tommasi, and M.~Andreux.
\newblock Flamby: Datasets and benchmarks for cross-silo federated learning in
  realistic healthcare settings.
\newblock In S.~Koyejo, S.~Mohamed, A.~Agarwal, D.~Belgrave, K.~Cho, and A.~Oh,
  editors, \emph{Advances in Neural Information Processing Systems}, volume~35,
  pages 5315--5334. Curran Associates, Inc., 2022.
\newblock URL
  \url{https://proceedings.neurips.cc/paper_files/paper/2022/file/232eee8ef411a0a316efa298d7be3c2b-Paper-Datasets_and_Benchmarks.pdf}.

\bibitem[Philippenko et~al.(2025)Philippenko, Scaman, and
  Massouli{\'e}]{philippenko2025depth}
C.~Philippenko, K.~Scaman, and L.~Massouli{\'e}.
\newblock In-depth analysis of low-rank matrix factorisation in a federated
  setting.
\newblock In \emph{Proceedings of the AAAI Conference on Artificial
  Intelligence}, volume~39, pages 19904--19912, 2025.

\bibitem[Robbins and Monro(1951)]{robbins1951stochastic}
H.~Robbins and S.~Monro.
\newblock A stochastic approximation method.
\newblock \emph{The annals of mathematical statistics}, pages 400--407, 1951.

\bibitem[Sattler et~al.(2020)Sattler, M{\"u}ller, and
  Samek]{sattler2020clustered}
F.~Sattler, K.-R. M{\"u}ller, and W.~Samek.
\newblock Clustered federated learning: Model-agnostic distributed multitask
  optimization under privacy constraints.
\newblock \emph{IEEE transactions on neural networks and learning systems},
  32\penalty0 (8):\penalty0 3710--3722, 2020.

\bibitem[T~Dinh et~al.(2020)T~Dinh, Tran, and Nguyen]{dinh2020personalized}
C.~T~Dinh, N.~Tran, and J.~Nguyen.
\newblock Personalized federated learning with moreau envelopes.
\newblock \emph{Advances in neural information processing systems},
  33:\penalty0 21394--21405, 2020.

\bibitem[Tan et~al.(2022)Tan, Yu, Cui, and Yang]{tan2022towards}
A.~Z. Tan, H.~Yu, L.~Cui, and Q.~Yang.
\newblock Towards personalized federated learning.
\newblock \emph{IEEE transactions on neural networks and learning systems},
  34\penalty0 (12):\penalty0 9587--9603, 2022.

\bibitem[Werner et~al.(2023)Werner, He, Jordan, Jaggi, and
  Karimireddy]{werner2023provably}
M.~Werner, L.~He, M.~Jordan, M.~Jaggi, and S.~P. Karimireddy.
\newblock Provably personalized and robust federated learning.
\newblock \emph{Transactions on Machine Learning Research}, 2023.
\newblock ISSN 2835-8856.
\newblock URL \url{https://openreview.net/forum?id=B0uBSSUy0G}.

\bibitem[Yu et~al.(2020)Yu, Bagdasaryan, and Shmatikov]{yu2020salvaging}
T.~Yu, E.~Bagdasaryan, and V.~Shmatikov.
\newblock Salvaging federated learning by local adaptation.
\newblock \emph{arXiv preprint arXiv:2002.04758}, 2020.

\end{thebibliography}

\newpage
\appendix

\begin{center}
	{\Large{\bf Supplementary material}}
\end{center}

In this appendix, we provide additional information to supplement our work. In \Cref{app:sec:experiments}, we provide complete details on our experiments and in \Cref{app:sec:proofs}, we give the proofs of our theorems. 
	
\setcounter{equation}{0}
\setcounter{figure}{0}
\setcounter{table}{0}
\setcounter{theorem}{0}
\setcounter{lemma}{0}
\setcounter{remark}{0}
\setcounter{proposition}{0}
\setcounter{property}{0}
\setcounter{definition}{0}

\renewcommand{\theequation}{S\arabic{equation}}
\renewcommand{\thefigure}{S\arabic{figure}}
\renewcommand{\thetheorem}{S\arabic{theorem}}
\renewcommand{\thelemma}{S\arabic{lemma}}
\renewcommand{\theproposition}{S\arabic{proposition}}
\renewcommand{\thecorollary}{S\arabic{corollary}}
\renewcommand{\thedefinition}{S\arabic{definition}}
\renewcommand{\theproperty}{S\arabic{property}}
\renewcommand{\theremark}{S\arabic{remark}}
\renewcommand{\thetable}{S\arabic{table}}

	\hypersetup{linkcolor = black}
	\setlength\cftparskip{2pt}
	\setlength\cftbeforesecskip{2pt}
	\setlength\cftaftertoctitleskip{3pt}
	\addtocontents{toc}{\protect\setcounter{tocdepth}{2}}
	\setcounter{tocdepth}{1}
	\tableofcontents
	
	\addtocontents{toc}{
    \protect\thispagestyle{empty}} 
    \thispagestyle{empty} 
    
    \hypersetup{linkcolor=blue}

\setlength{\parindent}{0pt}

\section{Experiments}
\label{app:sec:experiments}

This section provides additional details to support the experimental part of the main paper. We first describe the hardware configuration used to run all experiments, ensuring transparency of computational requirements. We then outline the setting of the synthetic dataset, followed by the procedure used to generate the synthetic splits of MNIST and CIFAR10 designed to simulate clients heterogeneity. Next, we summarize the complete training settings used across all experiments, including hyperparameters. Finally, we present complementary experimental results to give a more comprehensive view of model performance and personalization stability.

\paragraph{Hardware Configuration}
The experiments have been run on a machine equipped with two AMD EPYC 7302 processors (16 cores each, 3.0–3.3 GHz), a NVIDIA RTX 8000 GPU with 48 GB of memory, and 192 GB of system RAM. The experiments were conducted using three different seeds, each starting the training from scratch: 127 (Mersenne number), 496 (perfect number), 1729 (Ramanujan number).

\paragraph{Setting of the synthetic dataset.} We consider least-squares regression (LSR) with $N = 20$ clients. Each client $i \in {1, \dots, N}$ receives a pair of i.i.d. observations $(x_i, y_i) \sim \mathcal{D}_i$ in an online fashion, the feature dimension is fixed to $d \in \{2, 10\}$.. 
We assume the existence of two well-defined, client-specific model $\theta^{\star, 0}, \theta^{\star, 1} \in \mathbb{R}^d$ s.t.:
\begin{align*}
	\forall t \in \OneToT, \quad y_i^t = \PdtScl{x_i^t}{\theta^{\star, i\pmod 2 }}~, \quad \text{with}~~x_i^t \sim \mathcal{N}(0, H_i)\,, H_i = P_i^T \mathrm{Diag}(1) P_i\,, \text{and}\, P_i \sim \mathcal{O}_d\,, 
\end{align*}
therefore, we consider a setting where all clients inside a same cluster share the exact same optimal point.
For any $i \in \OneToN$, we consider the expected squared loss on client $i$ of a model $w$ as $R_{i}(\theta) := \E_{(x_i, y_i)\sim\mathcal{D}_i}[(\PdtScl{x_i}{\theta} - y_i)^2]$.
This setting is very close to the one considered in the experimental part of \citet{chayti2021linear}.
For $d=2$, we take a constant step-size $\gamma = (2\smoothness)^{-1}$, for $d=10$, and we take a reduced $\gamma = (4 \smoothness)^{-1}$, because the variance $\sigma_k$ increase with the dimensionality.

\paragraph{Setting of \texorpdfstring{\Cref{fig:sufficient_cluster}}{Figure 1}.} For this figure, we consider the setting where for each clients $i$ in $\OneToN$, there exist a true optimal model of the form $\theta^\star_i = \theta^\star + \xi_i$ in $\R^d$ with $\xi \sim \mathcal{N}(0, v^2 \mathrm{I}_d)$ and $v$ is in $\R_+$. This setup illustrate the concept of sufficient cluster, where clients initially collaborate with everyone and then shift into a localized training. The point of transition depends on the models' variance: the higher the variance, the higher the excess loss at which the collaboration regime shifts. The size of the sufficient clusters is computed by injecting \Cref{prop:quadratic_fn} in \Cref{def:effectif_cluster_var}.

\paragraph{Synthethic split of Mnist and Cifar10.} We split Mnist and Cifar10 accross the $N=20$ clients by using a cluster split defined below. This models statistical heterogeneity across clients.

\begin{definition}[Cluster-based split]
	\label{app:def:cluster_split}
	Let $(\mathbf{X}, \mathbf{Y})$ be a dataset with $K$ classes, such as MNIST or CIFAR-10. The cluster-based split partitions the dataset across $N$ clients based on classes, as follows:
	\begin{itemize}[leftmargin=*, itemsep=0pt, parsep=0pt, partopsep=0pt]
		\item Define two disjoint clusters of labels: $\mathcal{C}_0 = \{0, \dots, \floor{K/2}\}$ and $\mathcal{C}_1 = \{\floor{K/2} + 1, \dots, K\}$.
		\item Each client $i \in \OneToN$ is assigned data from a single cluster: clients with even index $i$ are assigned only examples with labels in $\mathcal{C}_0$, while odd-index clients receive data from $\mathcal{C}_1$.
		\item Within each cluster, data is partitioned uniformly across the $N/2$ clients assigned to it.
	\end{itemize}
	This results in a setting where no client observes the full label space,  the label distributions are disjoint across clients from different clusters, and local distributions are label-homogeneous within the assigned cluster.
\end{definition}

\begin{table}
	\caption{Settings of experiments}
	\label{app:tab:settings_xp}
	\centering
	\resizebox{\textwidth}{!}{%
		\begin{tabular}{>{\raggedright\arraybackslash}p{3.1cm}ccccc}
			\toprule
			Settings & Synth. & Mnist & Cifar10 & Heart Disease & Ixi \\
			\hline 
			references & NA & \citepalias{lecun_gradient-based_1998} & \citepalias{krizhevsky_learning_2009} & \citepalias{terrail2022flamby} & \citepalias{terrail2022flamby}\\
			task & linear regr. & \multicolumn{2}{c}{image classif.} & binary classif. & 3D segment.  \\
			model & Linear & CNN  & LeNet & Linear & Unet \\
			trainable parameters $d$ & $d \in \{2, 10 \}$ & $20\times 10^3$ & $62\times 10^3$ & $14$ & $246\times 10^3$ \\
			clients number $N$ & 20 & 20 & 20 & 4 & 3 \\
			split & NA & \multicolumn{2}{c}{cluster split} & \tcmv{natural} & \tcmv{natural} \\
			training dataset size & NA & $60,000$ & $60,000$ & $486$ & $453$ \\
			momentum $m$ & $0$ & $0$ & $0.9$ & $0$ & $0.9$ \\
			batch size $b$& $2$ & \multicolumn{2}{c}{$16$} & 1 & 8\\
			batch size $b_\alpha$ & $1$ & \multicolumn{2}{c}{$512$} & \multicolumn{2}{c}{$16$}\\
			inner iterations & 1 & \multicolumn{2}{c}{$50$} & \multicolumn{2}{c}{$\frac{1}{N} \sum_{i=1}^N n_i$} \\
			step size $\gamma$ & $\propto 1 / \smoothness$ & \multicolumn{2}{c}{$0.1$} & 0.05 & 0.01 \\ 
			StepLR scheduler & None & $(80, 10^{-1})$ & None & $(5, 10^{-1})$ & None \\
			Weight decay & $0$ & \multicolumn{4}{c}{$5\times 10^{-4}$} \\
			loss & mean square & \multicolumn{2}{c}{cross entropy} & logistic & dice \\
			\bottomrule
		\end{tabular}
	}
\end{table}

\begin{table}
	\centering
	\caption{Standard deviation of train and test accuracy/loss for the real datasets.}
	\label{app:tab:exp_real_std}
	\resizebox{\textwidth}{!}{%
		\begin{tabular}{p{0.09\textwidth}p{0.17\textwidth}cccccccc}
			\toprule
			& Algorithm & \multicolumn{2}{c}{Mnist} & \multicolumn{2}{c}{Cifar10} & \multicolumn{2}{c}{ Heart Disease} & \multicolumn{2}{c}{Ixi} \\
			& & Train & Test & Train & Test & Train & Test & Train & Test \\
			\midrule
			Accuracy & All-for-one-bin & 0.0 & 0.4 & 7.5 & 5.1 & 13.7 & 8.2 & 0.1 & 0.1 \\ 
			(in $\%$) & All-for-one-cont & 0.0 & 0.4 & 7.8 & 5.5 & 13.7 & 8.3 & 0.1 & 0.1 \\ 
			&  Local & 0.0 & 0.4 & 7.8 & 5.0 & 25.7 & 8.3 & 0.1 & 0.1 \\ 
			&  FedAvg & 1.2 & 0.5 & 27.5 & 28.2 & 22.0 & 5.5 & 1.8 & 2.0 \\ 
			&  Ditto & 0.0 & 0.4 & 6.8 & 4.5 & 22.0 & 8.4 & 0.1 & 0.1 \\ 
			&  Cobo & 0.0 & 0.4 & 7.6 & 5.1 & 16.6 & 8.5 & 0.1 & 0.1 \\ 
			&  Wga-bc & 1.8 & 0.8 & 7.2 & 6.9 & 12.6 & 5.0 & 2.6 & 2.6 \\ 
			&  Apfl & 0.0 & 0.4 & 8.4 & 5.0 & 17.5 & 8.4 & 4.8 & 5.0 \\ 
			Loss & All-for-one-bin & -4.01 & -1.64 & -0.11 & -0.04 & -0.67 & -0.41 & -1.57 & -1.56 \\ 
			(in log) & All-for-one-cont & -3.99 & -1.59 & -0.12 & -0.04 & -0.68 & -0.41 & -1.57 & -1.56 \\ 
			&  Local & -4.10 & -1.53 & -0.21 & 0.05 & -0.52 & -0.41 & -1.58 & -1.57 \\ 
			&  FedAvg & -1.61 & -1.40 & 0.26 & 0.26 & -0.22 & -0.26 & -1.08 & -1.08 \\ 
			&  Ditto & -3.86 & -1.52 & -0.22 & 0.06 & -0.41 & -0.40 & -1.58 & -1.57 \\ 
			&  Cobo & -4.01 & -1.53 & -0.21 & 0.05 & -0.60 & -0.41 & -1.58 & -1.57 \\ 
			&  Wga-bc & -0.94 & -0.91 & 0.36 & 0.36 & -0.12 & -0.16 & -1.20 & -1.19 \\ 
			&  Apfl & -3.94 & -1.54 & -0.22 & 0.05 & -0.43 & -0.41 & -0.53 & -0.54 \\ 
			\bottomrule
		\end{tabular}
	}
\end{table}

\begin{figure}[H]
	\centering
	\centering     
	\begin{subfigure}{0.24\linewidth}
		\includegraphics[width=\linewidth]{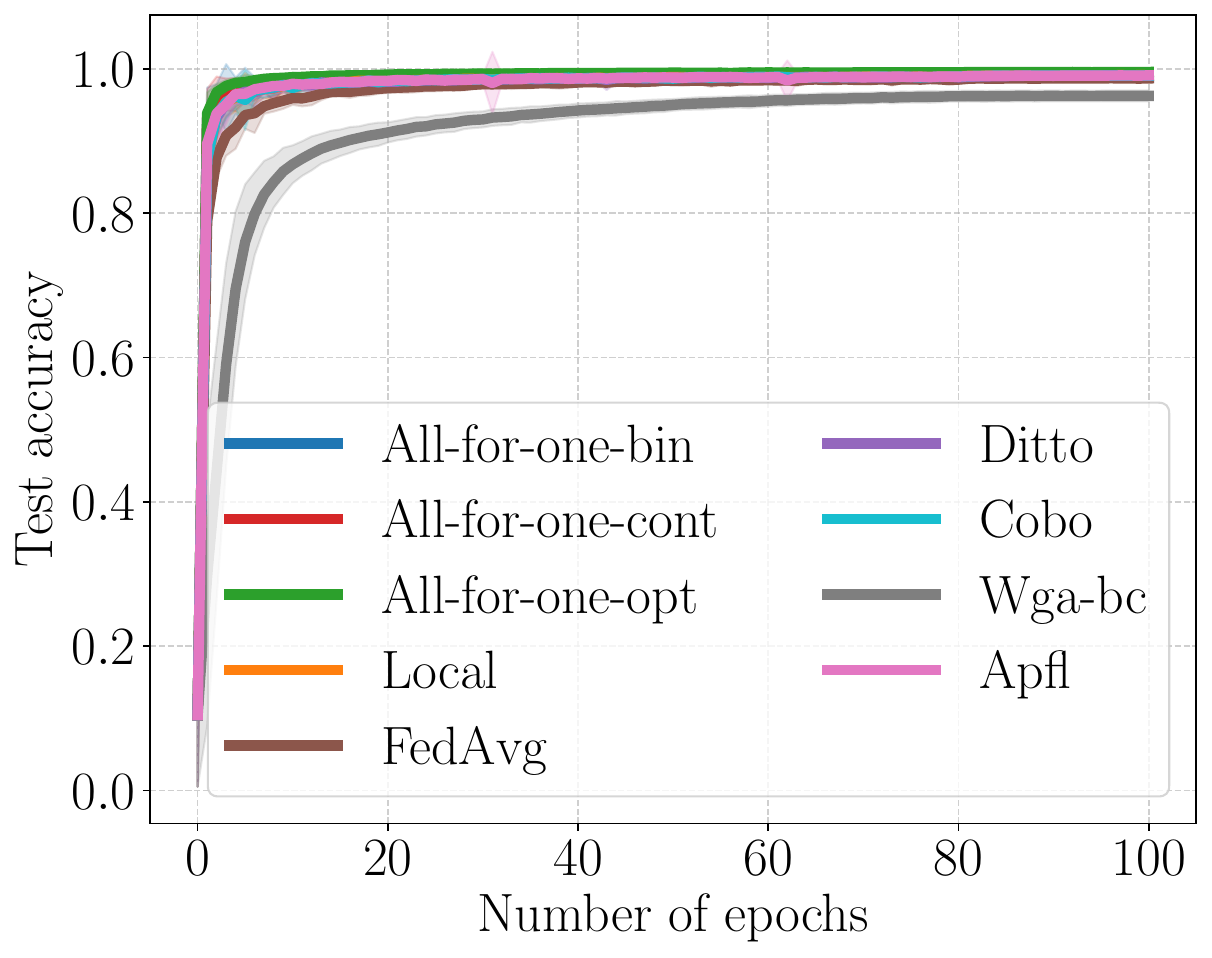}
		\caption{\label{app:fig:mnist} Mnist.} 
	\end{subfigure}
	\begin{subfigure}{0.24\linewidth}
		\includegraphics[width=\linewidth]{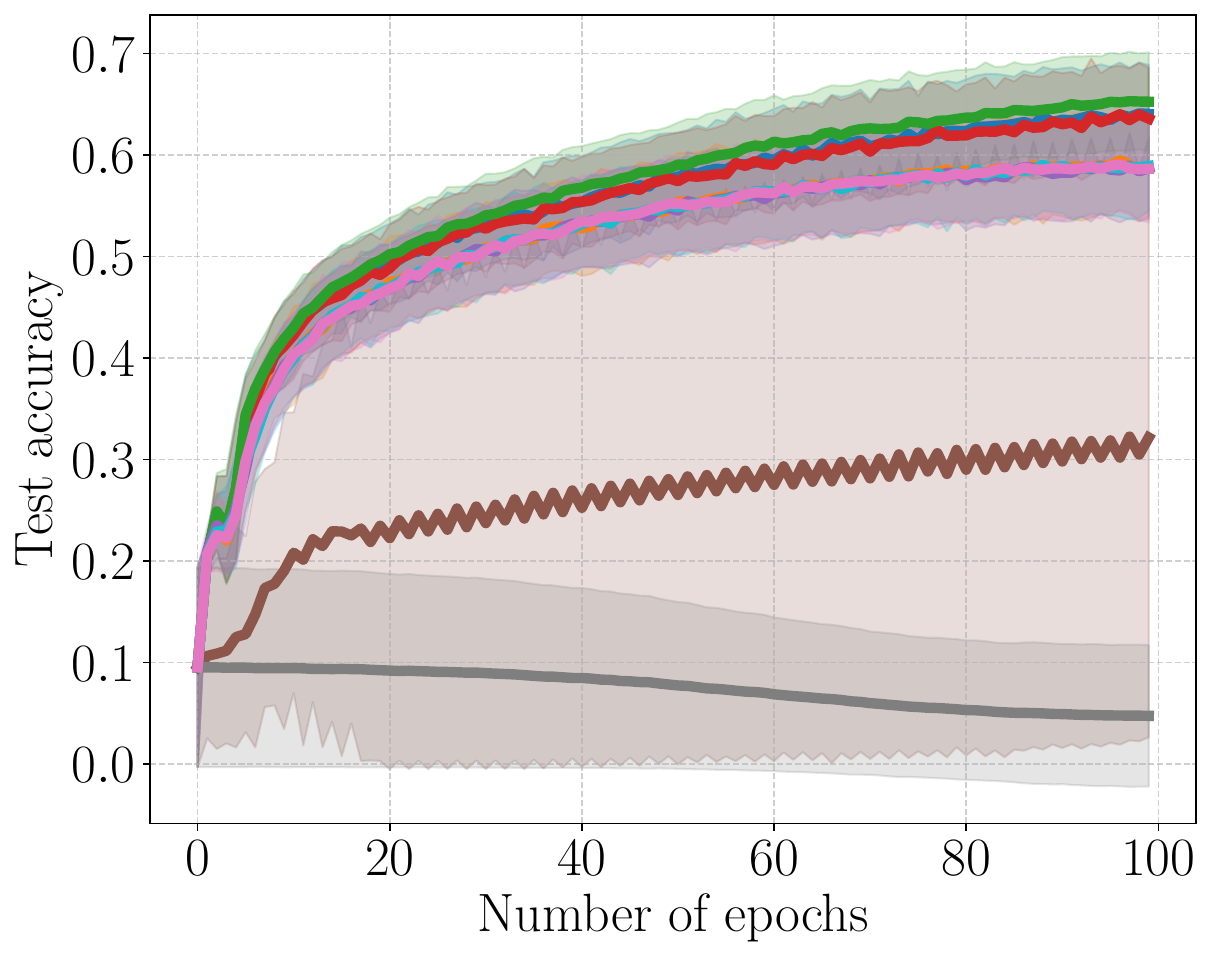}
		\caption{\label{app:fig:cifar10} Cifar10.}
	\end{subfigure}
	\begin{subfigure}{0.24\linewidth}
		\includegraphics[width=\linewidth]{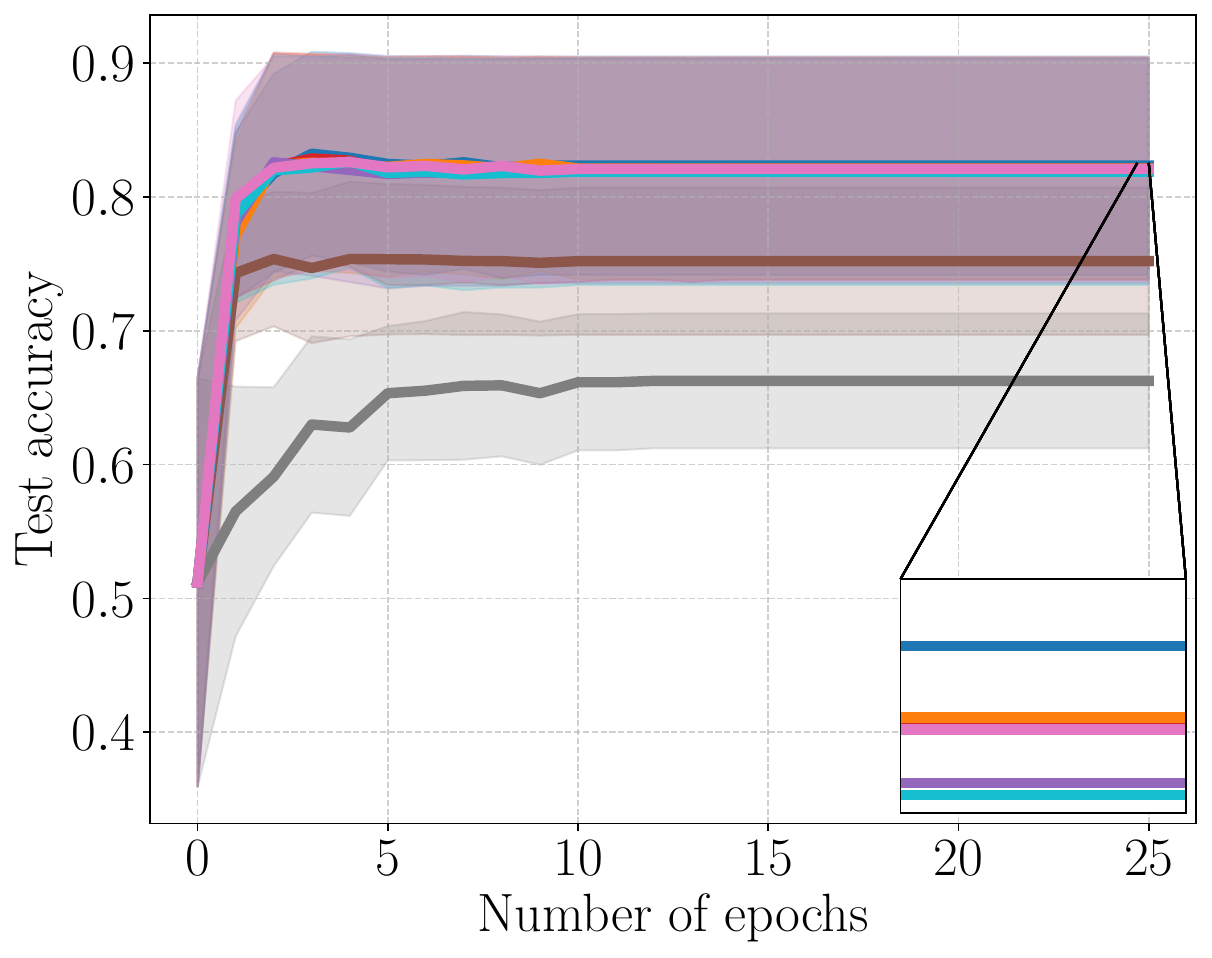}
		\caption{ \label{app:fig:heart_disease} Heart disease}
	\end{subfigure}
	\begin{subfigure}{0.24\linewidth}
		\includegraphics[width=\linewidth]{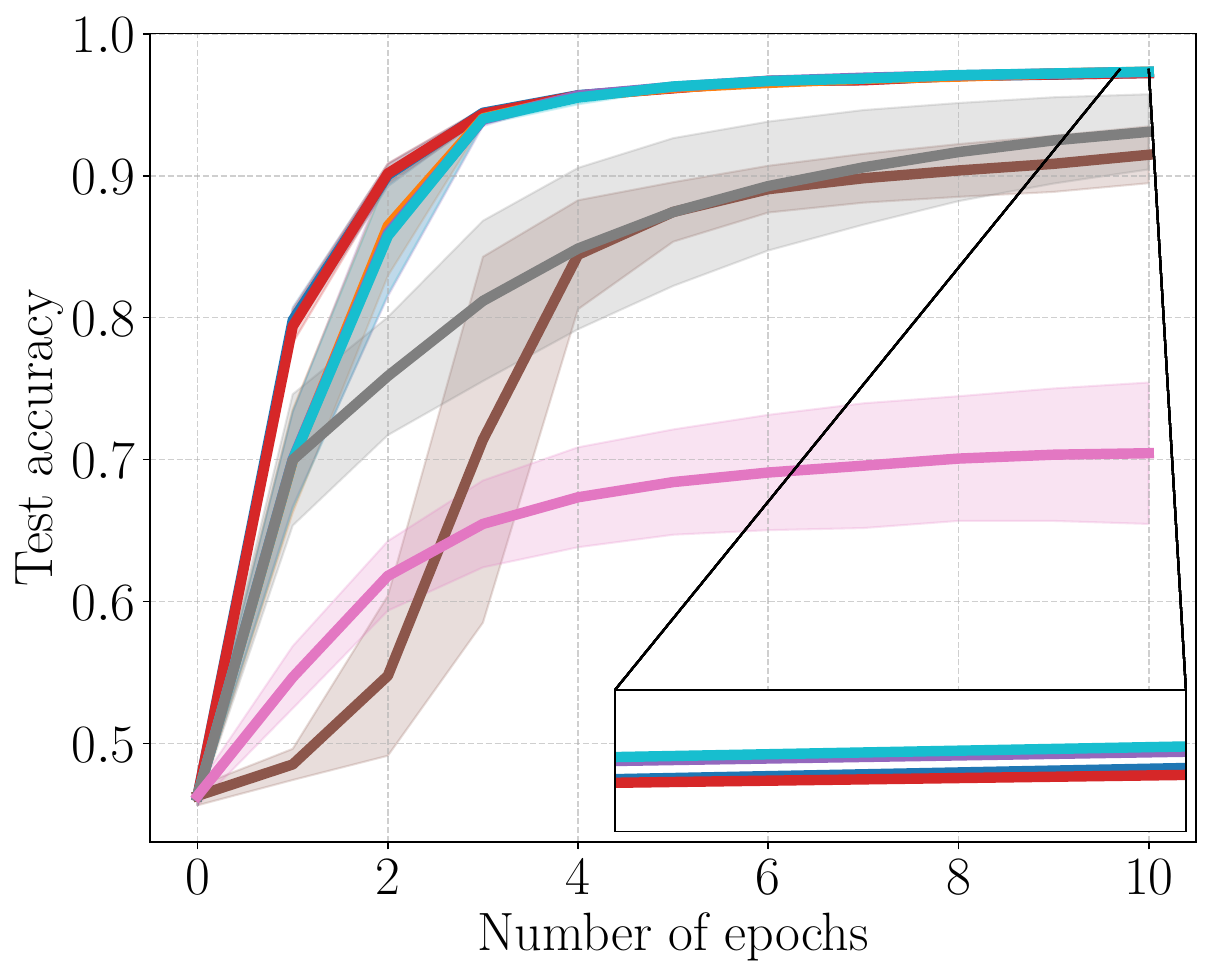}
		\caption{ \label{app:fig:ixi} Ixi}
	\end{subfigure}
	\caption{\label{app:fig:real_dataset} Test loss for real dataset. X-axis: number of epochs. Y-axis: test accuracy.}
\end{figure}

\paragraph{Settings of training.} \Cref{app:tab:settings_xp} summarizes the experimental configurations used across all datasets, including task types, models, and optimization hyperparameters. This ensures reproducibility of our experiments. The batch size $b_{\alpha}$ refers to the numbers of sample used to estimate the similarity ratio $(r_{ik})_{i,k \in \OneToN}$ as described in \Cref{sec:application}. The inner iterations denote the number of batch updates performed before recomputing the similarity weights, which helps reduce computational overhead. For Mnist and Cifar10, the weights are updated every 50 batches; and for Heart Disease and Ixi, the update occurs after each client has, on average, processed its entire local dataset once.

\paragraph{Complementary experimental results.} In \Cref{app:tab:exp_real_std}, we give the standard deviation of the train and test accuracy/loss for the four real datasets. In \Cref{app:fig:real_dataset}, we plot the average of test accuracy over the $N$ clients and the three seeds w.r.t. the number of of epoch. Note that the average is weighted by the relative size of each client.

\newpage

\section{Proofs}
\label{app:sec:proofs}

This Section gathers the full proofs of the theoretical results stated in the main paper.

\subsection{Proof of \texorpdfstring{\Cref{lem:descent_lemma}}{Lemma 1}}

\begin{proof}
\label{proof:descent_lemma}
    Fix a client $i \in \OneToN$ and an iteration index $t \in \N^*$. For clarity, we define $\mathcal{A}_i^t := \sum_{k=1}^N \alpha^t_{ik}$ and rewrite the \allforone~update as:
    $$
    \theta^t_i = \theta^{t-1}_i - \eta_i^t \mathcal{A}_i^t \sum_{k=1}^N \alpha_{ik}^{t} g_k^t(\theta^{t-1}_i) / \mathcal{A}_i^t\,.
    $$

    Let $g^t := \sum_{k=1}^N \alpha_{ik}^{t} g_k^t(\theta^{t-1}_i) / \mathcal{A}_i^t$ and $\nabla R_{\alpha_i}(\theta^{t-1}_i) := \sum_{k=1}^N \alpha_{ik}^t \nabla R_{k}(\theta^{t-1}_i) / \mathcal{A}_i^t$. By $\smoothness$-smoothness of $R_i$, we have:
    $$
    R_{i}(\theta^{t}_i) - R_{i}(\theta^{t-1}_i) \leq -\eta_i^t \mathcal{A}_i^t \PdtScl{ g^{t}}{\nabla R_{i}(\theta^{t-1}_i)} + \frac{(\eta_i^t \mathcal{A}_i^t)^2 \smoothness}{2} \sqrdnrm{g^{t}}\,.
    $$

    Noting that $R_i(\theta^t_i) - R_i(\theta_i^{t-1}) = \varepsilon_i^t - \varepsilon_i^{t-1}$, and taking expectation conditioned on $\theta^{t-1}_i$, we obtain:
    \begin{align}
    \label{app:eq:before_asu_var_grad_sto}
        \expec{\varepsilon_{i}^t}{\theta^{t-1}_i} - \varepsilon_{i}^{t-1}
        &\leq -\eta_i^t \mathcal{A}_i^t \PdtScl{\nabla R_{\alpha_i}(\theta^{t-1}_i)}{\nabla R_{i}(\theta^{t-1}_i)} 
        + \frac{(\eta_i^t \mathcal{A}_i^t)^2 \smoothness}{2} \Expec{\sqrdnrm{g^{t}}}{\theta^{t-1}_i}\,.
    \end{align}

    Since $\expec{g_k^t(\theta_i^{t-1})}{\theta_i^{t-1}} = \nabla R_k(\theta_i^{t-1})$ and that gradients across clients are independent given $\theta_i^{t-1}$, we have:
    \begin{align*}
        \expec{\sqrdnrm{g^t}}{\theta^{t-1}_i}
        &= \expec{\sqrdnrm{\sum_{k \in \OneToN} \ffrac{\alpha_{ik}^t}{\mathcal{A}_i^t} g_k^t(\theta^{t-1}_i)}}{\theta^{t-1}_i} \\
        &= \expec{\sqrdnrm{\sum_{k \in \OneToN} \ffrac{\alpha_{ik}^t}{\mathcal{A}_i^t} (g_k^t(\theta^{t-1}_i) - \nabla R_k(\theta_i^{t-1}))}}{\theta^{t-1}_i} 
        + \sqrdnrm{\nabla R_{\alpha_i}(\theta_i^{t-1})} \\
        &= \sum_{k \in \OneToN} \bigpar{\ffrac{\alpha_{ik}^t}{\mathcal{A}_i^t}}^2 \expec{\sqrdnrm{g_k^t(\theta^{t-1}_i) - \nabla R_k(\theta_i^{t-1})}}{\theta^{t-1}_i} 
        + \sqrdnrm{\nabla R_{\alpha_i}(\theta_i^{t-1})}\,.
    \end{align*}

    Next with \Cref{asu:bounded_variance} and back to \Cref{app:eq:before_asu_var_grad_sto}, we have:
    \begin{align*}
        \expec{\varepsilon_{i}^t}{\theta^{t-1}_i} - \varepsilon_{i}^{t-1}
        &\leq -\eta_i^t \mathcal{A}_i^t \PdtScl{\nabla R_{\alpha_i}(\theta^{t-1}_i)}{\nabla R_{i}(\theta^{t-1}_i)} \\
        &\qquad + \frac{(\eta_i^t \mathcal{A}_i^t)^2 \smoothness}{2} \left( \sqrdnrm{\nabla R_{\alpha_i}(\theta_i^{t-1})} + \sum_{k=1}^N \ffrac{(\alpha_{ik}^{t})^2 \sigma_k^2}{(\mathcal{A}_i^t)^2} \right)\,.
    \end{align*}

    Using the identity $\pdtscl{a}{b} = \frac{1}{2} \left( \sqrdnrm{a} + \sqrdnrm{b} - \sqrdnrm{a - b} \right)$, we obtain:
    \begin{align*}
        \expec{\varepsilon_{i}^t}{\theta^{t-1}_i} - \varepsilon_{i}^{t-1}
        &\leq -\frac{\eta_i^t \mathcal{A}_i^t}{2} (1 - \eta_i^t \mathcal{A}_i^t \smoothness) \sqrdnrm{\nabla R_{\alpha_i}(\theta^{t-1}_i)} 
        - \frac{\eta_i^t \mathcal{A}_i^t}{2} \sqrdnrm{\nabla R_{i}(\theta^{t-1}_i)} \\
        &\qquad + \frac{\eta_i^t \mathcal{A}_i^t}{2} \sqrdnrm{\nabla R_{\alpha_i}(\theta^{t-1}_i) - \nabla R_{i}(\theta^{t-1}_i)} 
        + \frac{(\eta_i^t)^2 \smoothness \sum_{k=1}^N \sigma_k^2 (\alpha_{ik}^{t})^2}{2}\,.
    \end{align*}

    Assuming $1 - \eta_i^t \mathcal{A}_i^t \smoothness > 0$, we can drop the first term. Also, using $\mathcal{A}_i^t = \sum_{k=1}^N \alpha_{ik}^t$, we deduce:
    \begin{align*}
        \expec{\varepsilon_{i}^t}{\theta^{t-1}_i} - \varepsilon_{i}^{t-1}
        &\leq \frac{\eta_i^t}{2} \sum_{k=1}^N \alpha_{ik}^{t} \left( \sqrdnrm{\nabla R_{k}(\theta^{t-1}_i) - \nabla R_{i}(\theta^{t-1}_i)} - \sqrdnrm{\nabla R_{i}(\theta^{t-1}_i)} \right) \\
        &\qquad + \eta_i^t \smoothness \sum_{k=1}^N (\alpha_{ik}^{t})^2 \sigma_k^2\,,
    \end{align*}
    which concludes the proof.
\end{proof}

\subsection{Proof of 
\texorpdfstring{\Cref{thm:convergence,thm:convergence_non_cvx,thm:descent_lemma_with_N_i_t}}{Theorem 1 to 3}}
\label{app:subsec:proof_cvgce}

\begin{proof}

For any function $f : R \mapsto R$, we recall that we denote $\sigma_{i, f}^t \coloneqq \bigpar{\sum_{k} \frac{1}{\sigma_k^2}  f(r_{ik})}^{-1/2}$ and that by \Cref{def:weights_binary}:
\[
\left\{
\begin{aligned}
    r_{ik}^t &= \frac{\sqrdnrm{\nabla R_{i}(\theta^{t-1}_i)} - \sqrdnrm{\nabla R_{k}(\theta^{t-1}_i) - \nabla R_{i}(\theta^{t-1}_i)}}{\sqrdnrm{\nabla R_{i}(\theta^{t-1}_i)}} \,,\\
    \alpha_{ik}^t &= \frac{(\sigma_{i, \psi}^t)^2}{\sigma_k^2} \phi(r_{ik}^t) \,, \\
    \psi(r_{ik}^t) &= r_{ik}^t \phi(r_{ik}^t) \,.
\end{aligned}
\right.
\]

From this definitions, we first deduce: 
    \begin{align*}
        \sum_{k=1}^N \alpha_{ik}^{t} \bigpar{\sqrdnrm{\nabla R_{k}(\theta^{t-1}_i) - \nabla R_{i}(\theta^{t-1}_i)} - \sqrdnrm{\nabla R_{i}(\theta^{t-1}_i)}} &= - \sum_{k=1}^N \frac{(\sigma_{i, \psi}^t)^2}{\sigma_k^2} \phi(r_{ik}^t) r_{ik} \sqrdnrm{\nabla R_i(\theta_i^{t-1})} \\
        &= - \sqrdnrm{\nabla R_i(\theta_i^{t-1})} \,.
    \end{align*}
    This is a central aspect of our proof justifying the particular form of the collaboration weights (\Cref{def:weights_binary}). Second, we have:
    \begin{align*}
        &\sum_{k=1}^N (\alpha_{ik}^{t} \sigma_{k})^2 =  \sum_{k=1}^N \frac{(\sigma_{i, \psi}^t)^4}{\sigma_k^4} \phi^2(r_{ik}^t) \sigma_k^2 = \frac{(\sigma_{i, \psi}^t)^4}{(\sigma_{i, \phi^2}^t)^2}  \leq (\sigma_{i, \psi}^t)^2 = (\sigma_{i, \mathrm{eff}}^t)^2 \,.
    \end{align*}
    Where the last inequality comes from the fact that $\phi(x)\leq x$, yielding $\phi^2\leq \psi$.
    It results that \Cref{lem:descent_lemma} can be rewritten as follows:
    \begin{align}
    \label{eq:before_mu}
        \Expec{\varepsilon_{i}^t}{\theta^{t-1}_i} - \varepsilon_{i}^{t-1} \leq \frac{-\eta_i^t }{2} \sqrdnrm{\nabla R_{i}(\theta^{t-1}_i)} +\frac{(\eta_i^t \sigma_{i, \mathrm{eff}}^t)^2 \smoothness}{2} \,.
    \end{align}
    Taking full expectation, it gives in the strongly-convex setting (\cref{asu:strongly_convex}) or with the PL-condition (\Cref{asu:PL}):
    \begin{align*}
        \fullexpec{\varepsilon_{i}^t}) \leq \bigpar{1 -\eta_i^t \mu } \fullexpec{\varepsilon_{i}^{t-1}} +\frac{(\eta_i^t\sigma_{i, \mathrm{eff}}^t)^2 \smoothness}{2}\,.
    \end{align*}

    If we take a constant step-size $\eta_i^t = \eta_i$, summing over $T$ gives:
    \begin{align*}
        \FullExpec{\varepsilon_{i}^T} \leq \bigpar{1 - \eta_i \mu  }^T \varepsilon_{i}^0 + \frac{\eta_i^2 \smoothness}{2} \sum_{t=1}^T (1-\eta_i\mu)^{T-t} (\sigma_{i, \mathrm{eff}}^t)^2\,,
    \end{align*}
    which proves \Cref{thm:descent_lemma_with_N_i_t} recalling that $\sigma_{i, \mathrm{eff}}^t \coloneqq \sigma_{i, \psi}^t$ and that $\sum_{i \in \OneToN} \alpha_{ik} = (\sigma_{i, \psi}^t/ \sigma_{i, \phi}^t)^2$. 
    
    We now prove~\Cref{thm:convergence}. Upper-bounding the previous result with the minimal variance of the sufficient cluster~$\mathcal{N}_i^*(\varepsilon)$ (\Cref{lem:nested_set_of_collaborating_clients}), we have: 
    \begin{align*}
        \FullExpec{\varepsilon_{i}^T} \leq \bigpar{1 - \eta \mu }^T \varepsilon_{i}^0 + \frac{\eta \smoothness}{2 \mu} \sigma_{i,{\mathrm{suf}}}^2(\varepsilon)\,,
    \end{align*}
    with a bias term contracting for a constant step-size $\eta$ s.t. $\eta \leq \frac{1}{\mu}$. To obtain the optimal horizon-dependent step-size, we use the inequality $ 1 - x \leq e^{-x}$ for any $x$ in $\R$, and optimize this upper-bound w.r.t. $\eta$ to obtain $\eta_i = \frac{1}{\mu  T} \ln\bigpar{\frac{2\mu^2 \varepsilon_i^0 T}{\smoothness \sigma_{i,{\mathrm{suf}}}^2(\varepsilon )}} \leq \frac{1}{\mu}$, which gives: 
     \begin{align*}
         \FullExpec{\varepsilon_{i}^T} \leq \frac{\smoothness \sigma_{i,{\mathrm{suf}}}^2(\varepsilon)}{2 \mu^2 T} \bigpar{ 1 + \ln\bigpar{\frac{2 T \mu^2 \varepsilon_{i}^0}{\sigma_{i,{\mathrm{suf}}}^2(\varepsilon)}}} \,.
     \end{align*}

    For the decreasing step-size $\eta_i^t = \frac{C}{\mu  t}$ with $C > 1$, we  come back to \Cref{eq:before_mu}. As it is the same than Equation 4.23 in \citep{bottou2018optimization}, their Theorem 4.7 can be applied if we upper-bound the effective variance with the sufficient variance, and it results to:
    \begin{align*}
         \FullExpec{\varepsilon_{i}^T} \leq \frac{1}{T} \max(\frac{\smoothness \sigma_{i, \mathrm{bin}}^2(\varepsilon ) C^2}{2 \mu^2( C - 1)}, \varepsilon_{i}^0)\,.
    \end{align*}

    Altogether, we have proven \Cref{thm:convergence}.
    And finally to prove \Cref{thm:convergence_non_cvx} (non-convex setting without assuming the PL-condition), back to \Cref{eq:before_mu}, upper-bounding again with the minimal variance of cluster $\mathcal{N}_i^*(\epsilon)$, considering a constant step-size $\eta_i^t = \eta_i$, and unrolling all the iterations, we have:
    \begin{align*}
         \frac{1}{T} \sum_{t=1}^{T} \FullExpec{\sqrdnrm{\nabla R_{i}(\theta^{t-1}_i)}} \leq \frac{2 \varepsilon_{i}^0}{T \eta_i } + \eta_i \smoothness \sigma_{i,{\mathrm{suf}}}^2(\varepsilon)\,.
    \end{align*}    

    Minimizing the RHS w.r.t. $\eta_i$ gives $\eta_i^{\mathrm{opt}} = \sqrt{\frac{2 \varepsilon_{i}^0}{T \smoothness \sigma_{i,{\mathrm{bin}}}^2(\varepsilon)}}$ and thereby:
    \begin{align*}
         \frac{1}{T} \sum_{t=1}^{T} \FullExpec{\sqrdnrm{\nabla R_{i}(\theta^{t-1}_i)}} \leq 2 \sqrt{\frac{2 \varepsilon_{i}^0 \smoothness \sigma_{i,{\mathrm{bin}}}^2(\varepsilon)}{T}} \,,
    \end{align*}    
    which proves \Cref{thm:convergence_non_cvx}.
\end{proof}

\subsection{Proof of 
\texorpdfstring{\Cref{cor:sampling_complexity_optimality}}{Corollary 1}}
\label{app:subsec:proof_sampling_complexity}

\begin{proof}
Let any precision level $\varepsilon>0$, then after running \allforone~for   $T$ iterations in $\N^\star$ s.t. $\varepsilon_i^{T-1}\geq \varepsilon$, in the case of a decreasing step-size as defined in \Cref{tab:summary_res_convergence} (\Cref{thm:convergence}), we consider that the upper-bound on the excess loss is:
\begin{align*}
\fullexpec{\varepsilon_i^T} \leq \ffrac{1}{T} \cdot \ffrac{\smoothness \sigma_{i, \mathrm{suf}}^2(\varepsilon) C^2}{2 \mu^2 ( C - 1)} \,.
\end{align*}

We denote $\mathcal{C} = C^2 / (C - 1)$, and we look for $T_i^\varepsilon$ such that the above upper-bound is equal to $\varepsilon$, i.e., $
\ffrac{\smoothness \sigma_{i, \mathrm{suf}}^2(\varepsilon)\mathcal{C}}{2 \mu^2 T_i^\varepsilon} = \varepsilon$, it naturally yields $T_i^\varepsilon = \ffrac{\smoothness \sigma_{i, \mathrm{suf}}^2(\varepsilon) \mathcal{C}}{2 \mu^2 \varepsilon}
$.

To establish a connection with the lower bound provided by \citet{even2022sample}, we now introduce two additional assumptions to align our setting with theirs. First, assume that the variance of the stochastic gradients is uniform across clients, i.e., there exists $\sigma \in \R_+^\star$ such that for all $k \in \OneToN$, $\sigma_k = \sigma$. Second, assume that for any pair of clients $i, k$ in $\OneToN$, we have $c_{ik} = 0$. Under these assumptions, we obtain:
\[
\sigma_{i, \mathrm{suf}}^2(\varepsilon) = \sigma^{2} \bigpar{\sum_{k \in \OneToN}  \psi\bigpar{1 - \ffrac{b_{ik}}{2 \mu \varepsilon}} }^{-1} \leq \ffrac{\sigma^2}{\mathcal{N}_i^\star(\varepsilon) \min_{j \in \mathcal{N}_i^\star(\varepsilon)} \psi\bigpar{1 - \ffrac{b_{ik}}{2 \mu \varepsilon}}} \,,
\]
which concludes the proof.
\end{proof}

\end{document}